\newcolumntype{L}[1]{>{\raggedright\let\newline\\\arraybackslash\hspace{0pt}}m{#1}}
\newcolumntype{C}[1]{>{\centering\let\newline\\\arraybackslash\hspace{0pt}}m{#1}}
\newcolumntype{R}[1]{>{\raggedleft\let\newline\\\arraybackslash\hspace{0pt}}m{#1}}
\theoremstyle{definition}	
\newtheorem{assumption}{Assumption}
\newtheorem{problem}{Problem}
\newtheorem{theorem}{Theorem}
\newtheorem{remark}{Remark}
\begin{document}
\title{High-Speed Interception Multicopter Control by Image-based Visual Servoing}

\author{Kun Yang, Chenggang Bai, Zhikun She, Quan Quan
\thanks{Kun Yang, Chenggang Bai, Quan Quan are with the School of Automation Science and Electrical Engineering, Beihang University, Beijing 100191, China (e-mail: yangkun\_buaa@buaa.edu.cn; bcg@buaa.edu.cn; qq\_buaa@buaa.edu.cn). }
\thanks{Zhikun She is with the School of Mathematical Sciences, Beihang University, Beijing 100191, China (e-mail: zhikun.she@buaa.edu.cn). }
\thanks{\textit{Corresponding author: Quan Quan}}
}

\maketitle
\begin{abstract}
	In recent years, reports of illegal drones threatening public safety have increased. For the invasion of fully autonomous drones, traditional methods such as radio frequency interference and GPS shielding may fail. This paper proposes a scheme that uses an autonomous multicopter with a strapdown camera to intercept a maneuvering intruder UAV. The interceptor multicopter can autonomously detect and intercept intruders moving at high speed in the air. 
	The strapdown camera avoids the complex mechanical structure of the electro-optical pod, making the interceptor multicopter compact. However, the coupling of the camera and multicopter motion makes interception tasks difficult. 
	To solve this problem, an Image-Based Visual Servoing (IBVS) controller is proposed to make the interception fast and accurate. Then, in response to the time delay of sensor imaging and image processing relative to attitude changes in high-speed scenarios, a Delayed Kalman Filter (DKF) observer is generalized to predict the current image position and increase the update frequency.
	Finally, Hardware-in-the-Loop (HITL) simulations and outdoor flight experiments verify that this method has a high interception accuracy and success rate. In the flight experiments, a high-speed interception is achieved with a terminal speed of 20\,m/s.
\end{abstract}

\begin{IEEEkeywords}
high-speed interception, image-based visual servoing, anti-drone system,
delayed Kalman filter
\end{IEEEkeywords}

Video of the experiments: \url{https://youtu.be/146Km6c30Ww}

\section{Introduction\label{sec:Introduction}}

Advances in micro air vehicles, also known as drones, are exploiting opportunities in industrial, agricultural and military  fields \cite{Lee2021Antisway,floreano2015science}.
The rapid expansion of the drone industry has outpaced regulations for the safe operation of drones, making them representative means of illegal and destructive terror and crime \cite{ritchie2017micro}.
This has spurred the development of anti-drone (or counter-drone) technologies \cite{shi2018anti} to mitigate risks from drone accidents or terrorism, particularly against increasingly autonomous drones.

The drone interceptors can be classified as radio frequency interference \cite{10.1145/3309735, parlin2018jamming} and hard-kill of enemy drones \cite{Dronecatcher2021, chaari2020testing}. 
However, due to the small size and high flexibility of fully autonomous drones \cite{chen2020aerobatic,foehn2022alphapilot}, using these ground-based devices to rapidly deploy and intercept intruders is a challenging task.
A proposed solution is the use of drones with onboard sensors for effective interception, as depicted in Fig. \ref{fig:Interception-with-drones}.

\begin{figure}
	\centering{}\includegraphics{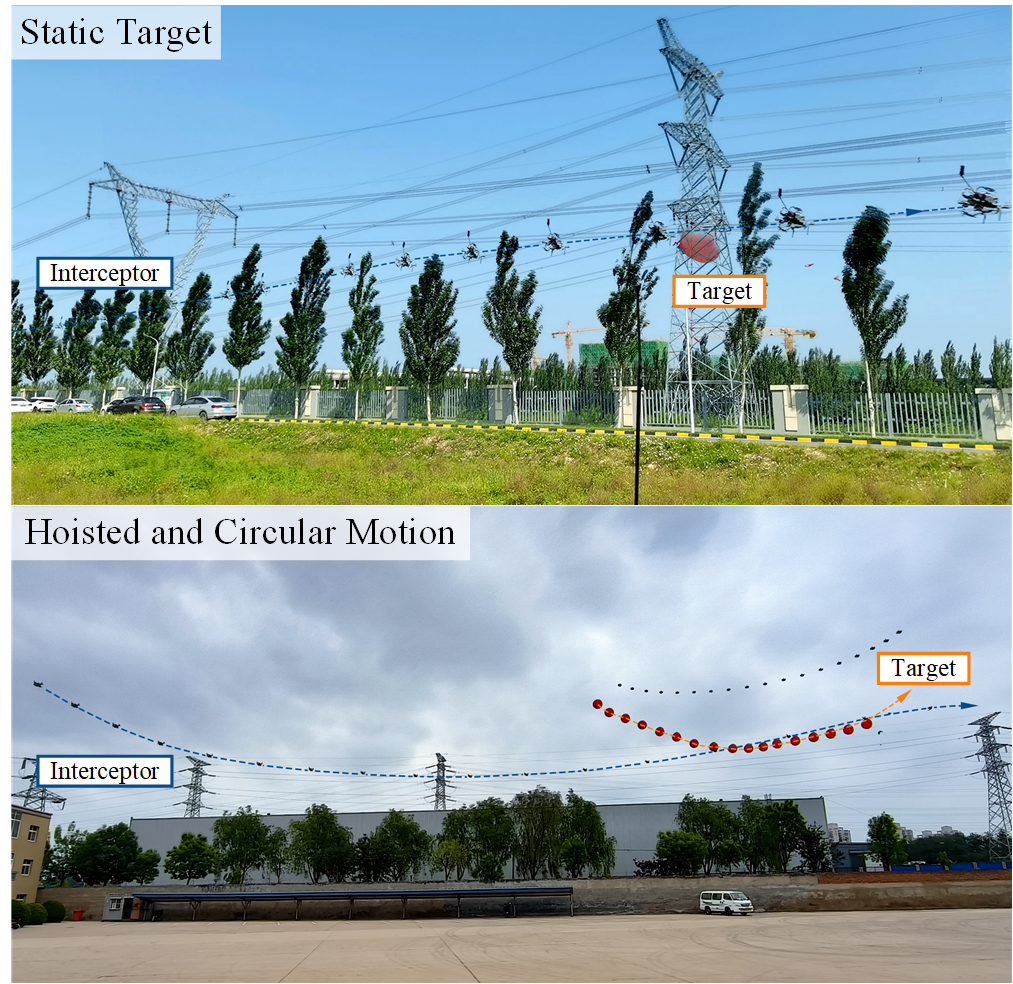}\caption{\label{fig:Interception-with-drones}Interception with drones.}
\end{figure}

\vspace*{-0.2cm}
\subsection{Problem Statement and Challenges}
Our previous research developed an Image-Based Visual Servo (IBVS) algorithm with a forward-looking monocular camera for real flight experiments at \SI{5}{m/s} \cite{yang2020autonomous}. This approach, however, struggles with high-speed interception tasks due to its reliance on the small-angle approximation, which fails at higher pitch angles observed during rapid movements, reaching up to \ang{50}.

High-speed multicopter movement highlights significant challenges, particularly in camera imaging and processing. {Delays around \SI{100}{ms} lead to notable control errors, up to \SI{2}{m} at speeds like \SI{20}{m/s}, challenging the assumption of flawless image processing.} Moreover, the camera's imaging frequency falls short of the requirements for effective control.
To solve the high-speed interception problem, two main research challenges are concluded:

\begin{enumerate}[C-1:]
\item The interceptor's high-speed and aggressive maneuverability necessitate a comprehensive attitude model and a redesigned controller that can handle extensive state space. This involves working within the three-dimensional special orthogonal group space, ${\mathsf{SO}}\left(3\right)$, requiring the development of a controller for large maneuver interception tasks.

\item The delayed, low frame rate, and easy loss of image feedback during the high-speed motion of the interceptor will lead to large interception errors. Accurate, real-time estimation of target coordinates on image is thus essential.
\end{enumerate}

\subsection{Related Work}

\begin{table*}[]
	\caption{\label{tab:Autonomous-interception-and-racing}Autonomous interception and racing of drones}
	\resizebox{1.0\linewidth}{!}{
	\renewcommand{\arraystretch}{1.8}
	\begin{tabular}{m{1.1cm}L{2cm}m{3cm}m{1.5cm}m{2cm}l}
	\hline
	Category                                   & Method                                                   & Feature                                                                                                                                                                        & Target speed                                                                                        & Interceptor speed                                                    & \multicolumn{1}{c}{Pipeline}                 \\ \hline
	\multirow{2}{*}{\makecell[l]{Location\\ estimation- \\ based}} & Classical perception-planning-control pipeline \cite{stasinchuk2021multi, vrba2022autonomous}          & Second place in Challenge 1 in MBZIRC 2020                                                                                                                                     & \makecell[l]{\SI{0}{m/s};\\ \SI{7}{m/s}}                                                & \makecell[l]{2\,-\,3\,\SI{}{m/s};\\ Almost no\\ mobility} & \raisebox{-.5\height}{\includegraphics[scale=0.8]{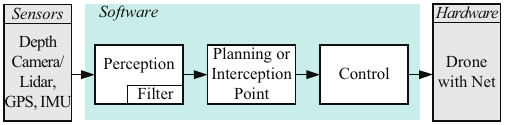}}            \\ \cline{2-6} 
											   & Optimal terminal-velocity-control guidance \cite{Lin2022}              & Make further improvement on the basis of Challenge 1 in MBZIRC 2020 (first place)                                                                                             & \SI{10}{m/s}                                                                                               & \SI{10}{m/s}                                                                & \raisebox{-.5\height}{\includegraphics[scale=0.8]{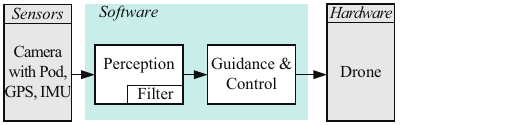}}           \\ \hline
	\multirow{2}{*}{\makecell[l]{Learning-\\ based} }            & Imitation learning and low-level control \cite{Deep2020}                & The neural network generates a minimum-jerk trajectory and the controller tracks it. First to demonstrate zero-shot sim-to-real transfer on drone racing & \makecell[l]{Simulation: \\ move slowly;\\ Real flight: \\ static narrow \\ gates} & \makecell[l]{4\,-\,12\,\SI{}{m/s};\\ 1\,-\,3\,\SI{}{m/s}}            & \raisebox{-.5\height}{\includegraphics[scale=0.8]{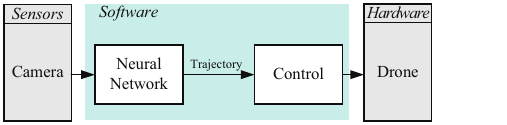}} \\ \cline{2-6} 
											   & End to end, imitation learning \cite{fu2022learning}                          & The neural network predicts commands directly from vision. Inputs are a sequence of images and drone state but no globally-consistent position     &  \makecell[l] {Simulation: \\ static gates}                                                                          & \textgreater{}\SI{10}{m/s}                                                               & \raisebox{-.5\height}{\includegraphics[scale=0.8]{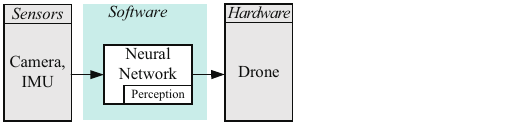}}  \\ \hline
	\multirow{2}{*}{\makecell[l]{IBVS-\\ based}}                      & Pseudo-linear Kalman filter and 3-D helical guidance law \cite{9946840} & A new 3-D helical guidance law is designed based on the theoretical conclusion of the observability enhancement                                                                & \SI{5}{m/s}                                                                                                & Helical tracking                                                     & \raisebox{-.5\height}{\includegraphics[scale=0.8]{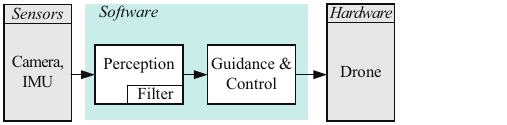}}         \\ \cline{2-6} 
											   & Perception and IBVS controller \cite{yang2020autonomous}                          & Use only a camera for autonomous UAV interception tasks, no global position is required                                                                                   & \makecell[l]{Simulation: \\ \SI{10}{m/s};\\ Real flight: \\ \SI{2}{m/s}}                      & \makecell[l]{\SI{12}{m/s};\\ \SI{5}{m/s}}                & \raisebox{-.5\height}{\includegraphics[scale=0.8]{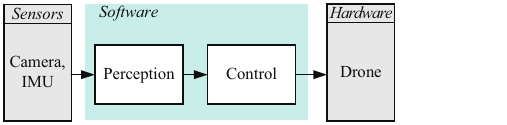}}          \\ \hline
	\end{tabular}
	}
\vspace{-0.3cm}
\end{table*}

\subsubsection{Review on Autonomous Interception and Racing}
\begin{figure}
	\centering{}\includegraphics[scale=0.9]{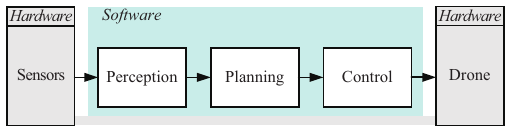}\caption{\label{fig:autonomous-concept}{A classic architecture for an autonomous interception and racing system.}}
\vspace{-0.4cm}
\end{figure}

Drone racing's objective of precise gate traversal aligns with autonomous interception goals, meriting inclusion in our review \cite{hanover2023autonomous}. {Typical autonomous interception and racing systems, as shown in Fig. \ref{fig:autonomous-concept}, consist of hardware (drones, sensors) and software (perception, planning, control modules). \textls[-9]{Research in this field falls into three categories: location estimation-based, learning-based, and IBVS-based methods, summarized in Table \ref{tab:Autonomous-interception-and-racing}.}}

Location estimation-based methods, using a perception-planning-control pipeline, were evident in MBZIRC 2020, with teams implementing such strategies to intercept both stationary and moving targets \cite{stasinchuk2021multi, vrba2022autonomous, Lin2022, 8746787}. These methods depend heavily on the estimation accuracy, planning rationality, and control accuracy for interception accuracy.

Learning-based methods involve neural networks replacing traditional perception, planning, and control modules. They focus on learning trajectories and predicting commands from visual inputs, offering robustness against system latencies and sensor noise \cite{Deep2020, fu2022learning}. Despite their potential, the complexity of real-world data collection limits their application in actual flight scenarios.

IBVS-based methods eliminate the planning module to reduce errors from inaccurate location estimation and planning. These methods, focusing on bearing measurements, have led to innovations like 3-D helical guidance laws \cite{9946840} and the use of virtual cameras for tracking targets \cite{8628313, 10044962}. This paper particularly explores the IBVS controller within the ${\mathsf{SO}}\left(3\right)$ space for effective large maneuver interceptions.

\subsubsection{Review on Delayed State Estimation}
{Indoor and outdoor localization systems frequently encounter measurement delays and losses, especially in Non-Line-of-Sight (NLOS) situations. This is akin to the challenges posed by visual sensor delays in drone technology. In indoor systems, strategies like Finite Impulse Response (FIR) filters have been developed, where lost measurements are predicted based on previous estimates \cite{7565735, GUAN2018161}. For outdoor systems, particularly those using GPS in obstacle-rich environments, Variational Bayesian (VB)-based Kalman filters are employed, utilizing Bernoulli variables for time delay and state variable estimation \cite{9477131}.}

{Despite these developments, modeling visual sensor delays in state estimation remains a challenge. Our approach, inspired by existing research, recalculates states over the delay period to address this \cite{9362168}.} To minimize computational demands, we update the covariance matrix as if there were no delay, adjusting it with an additional correction term when the delayed measurement arrives. This results in suboptimal estimates during the delay but optimal ones post-delay. We introduce the Delayed Kalman Filter (DKF) to manage delayed image measurements and real-time multicopter feedback, achieving high frame rate real-time image state estimates. This method, validated through simulations and experiments, effectively addresses issues related to delayed, low frame rate, and easily lost image feedback.

\subsection{Proposed Methods and Main Contributions}
This paper introduces a novel approach for using multicopters to intercept aerial targets. We propose a controller on ${\mathsf{SO}}\left(3\right)$ based on IBVS for high-speed, large-maneuver interceptions, and a DKF observer to counteract image delays in high-speed situations.

The contributions of this paper are summarized in the following. (i) An IBVS scheme is applied to high-speed interception, which solves the problem of coupling between aircraft motion and feature point imaging when the camera is fixedly connected to the aircraft body. In addition, this scheme is suitable for any installation angle and installation position of the camera. 
(ii) A delay filtering scheme is proposed to remedy the defects of delayed, low frame rate, and easy loss of image feedback. 
{(iii) The experiments verified the effectiveness of the algorithm. These experiments are rare in showcasing a multicopter flying at high speeds exceeding \SI{20}{m/s}, with a maximum pitch angle reaching \ang{50}, relying solely on its onboard perception and control.}

\subsection{Outline}

\begin{figure}
	\centering{}\includegraphics[scale=0.95]{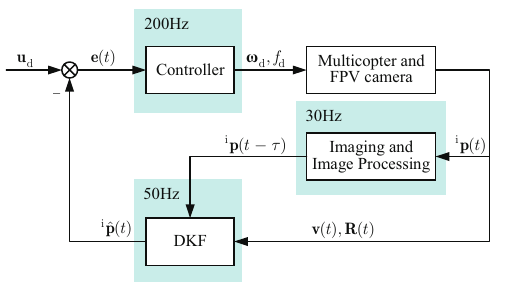}\caption{\label{fig:model-controller-and-observer}Relationship among visual servo model, controller, and state observer.}
\vspace{-0.3cm}
\end{figure}

The overall system architecture is shown in Fig. \ref{fig:model-controller-and-observer}.
In Section \ref{sec:Problem-Statement}, the high-speed multicopter interception problem is formulated and divided into the optimal state estimation problem and the attitude loop controller design problem. {Then, the attitude loop controller is designed based on the Lyapunov control theory and its stability is shown in Section \ref{sec:high-speed-interception-control}, and the DKF is proposed as a state observer to solve the optimal state estimation problem in Section \ref{sec:Delayed-Kalman-Filter}.} Finally, necessary simulations and real flight experiments are shown in Sections \ref{sec:HITL-simulation} and \ref{sec:flight-experiment}, and a conclusion is drawn in Section \ref{sec:conclusion}.

\section{Model and Problem Formulation\label{sec:Problem-Statement}}

In this section, the 6-Degree-of-Freedom (6-DOF) rigid model of multicopter and the IBVS model are proposed. With them, the interception controller design problem is formulated.  
Furthermore, the sensor measurement model is proposed and the state observer problem is conducted.

The relationship among the multicopter and IBVS models, the controller, and the state observer is shown in Fig. \ref{fig:model-controller-and-observer}.
The interception controller calculates the control signal by the image error and the multicopter state, and the DKF state observer estimates the current image features based on the multicopter state and the delayed image features. 
The specific forms of the three parts are introduced separately to achieve precise interception control under high-speed conditions.

\subsection{Coordinate Systems and Models}

\begin{figure}
\centering{}\includegraphics[scale=0.95]{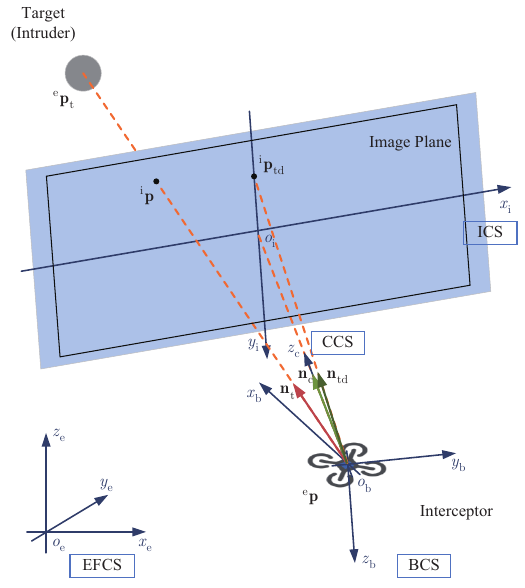}\caption{\label{fig:Description-of-the-interception-problem}Description of the interception problem. 
${^{{\rm {e}}}{\bf {p}}}$ and
${^{{\rm {e}}}{{\bf {p}}_{{\rm {t}}}}}$ respectively represent
the position of the intercepting multicopter and the target in EFCS; 
$^{{\rm {i}}}{\bf {p}}={\big[{{p_{{x_{{\rm {i}}}}}} \enspace {p_{{y_{{\rm {i}}}}}}}\big]^{{\rm {T}}}}$ is the coordinate of the target in ICS; 
${{{\bf {n}}_{{\rm {t}}}}}$ represents the target unit vector along the Line of Sight (LOS) in EFCS;
${{{\bf {n}}_{{\rm {c}}}}}$ represents the optical axis unit vector in EFCS; 
${\bf{n}}_{{\rm{td}}}$ is defined as the unit vector of the designed LOS in EFCS, called the designed LOS vector; ${^{{\rm {i}}}{\bf {p}}}_{{\rm {td}}}$ is the intersection of ${\bf{n}}_{{\rm{td}}}$ with the image plane in ICS, called the intercept point. 
}
\end{figure}

\subsubsection{Coordinate Systems}

There are five coordinate systems used in this paper, as shown in Fig. \ref{fig:Description-of-the-interception-problem}, including
\begin{itemize}
	\item ${\left\{ {\rm {e}}\right\} =\left\{ {{o_{{\rm {e}}}}\operatorname{-}{x_{{\rm {e}}}}{y_{{\rm {e}}}}{z_{{\rm {e}}}}}\right\} }$
	is the \textit{Earth-Fixed Coordinate System (EFCS)}, which is used to represent the position
	and velocity of interceptors and targets in the global inertial system; 
	\item {${\left\{ {\rm {b}}\right\} =\left\{ {{o_{{\rm {b}}}}\operatorname{-}{x_{{\rm {b}}}}{y_{{\rm {b}}}}{z_{{\rm {b}}}}}\right\} }$
	is the \textit{Body Coordinate System (BCS)}, which is used to represent the variables
	of the current attitude of interceptors;}
	\item ${\left\{ {\rm {c}}\right\} =\left\{ {{o_{{\rm {c}}}}\operatorname{-}{x_{{\rm {c}}}}{y_{{\rm {c}}}}{z_{{\rm {c}}}}}\right\} }$
	is the \textit{Camera Coordinate System (CCS)};
	\item ${\left\{ {\rm {i}}\right\} =\left\{ {{o_{{\rm {i}}}}\operatorname{-}{x_{{\rm {i}}}}{y_{{\rm {i}}}}}\right\} }$
	is the \textit{Image Coordinate System (ICS)}, used to express the image position
	of target features in the first-person view of cameras.
\end{itemize}

Matrix ${{\bf {R}}_{{\mathcal {F}}_1}^{{\mathcal {F}}_2}}$ represents
a rotation matrix from coordinate system ${{\mathcal {F}}_1}$ to ${{\mathcal {F}}_2}$, ${{\bf {R}}_{{\mathcal {F}}_1}^{{\mathcal {F}}_2}\in \mathsf{SO}\left(3\right)}$,
where the 3-D special orthogonal group is given by
\[
\mathsf{SO}\left(3\right)=\left\{ {{\bf {R}}\in{\mathbb{R}^{3\times3}} \mid {{\bf {R}}^{{\rm {T}}}}{\bf {R}}={{\bf {I}}},\det\left({\bf {R}}\right)=1}\right\}.
\]

\subsubsection{Multicopter Model}
{The multicopter (interceptor) is modeled as a rigid body of mass $m$ with a rotor drag proportional to its velocity \cite{KAI20178189, scirobotics.adg1462}. Its flight control rigid model is summarized as
\begin{equation}
\left\{ \begin{aligned}^{{\rm {e}}}\dot{{\bf {p}}} & = {^{{\rm {e}}}{\bf {v}}}\\
^{{\rm {e}}}\dot{{\bf {v}}} & ={{\bf {g}}}+\frac{1}{m}\left({}^{{\rm {e}}}{\bf {f}} + {}^{{\rm {e}}}{\bf {f}}_{\rm{drag}}\right) \\
\dot{{\bf {R}}}_{{\rm {b}}}^{{\rm {e}}} & ={\bf {R}}_{{\rm {b}}}^{{\rm {e}}}{\left[{^{{\rm {b}}}{\bf {\omega}}}\right]_{\times}}\\
{\bf {J}}\cdot{}^{{\rm {b}}}{\dot {\bf{\omega}}} & =-^{{\rm {b}}}{\bf {\omega}}\times\left({{\bf {J}}\cdot{}^{{\rm {b}}}{\bf {\omega}}}\right)+{{\bf {G}}_{a}}+{\bf {\tau}}
\end{aligned}
\right.\label{eq:rigid-body-model}
\end{equation}}
where ${^{{\rm {e}}}{\bf {p}}={\big[{{p_{{x_{{\rm {e}}}}}} \enspace {p_{{y_{{\rm {e}}}}}} \enspace {p_{{z_{{\rm {e}}}}}}}\big]^{{\rm {T}}}}}$ is the position of the multicopter under EFCS; 
${^{{\rm {e}}}{\bf {v}}={\big[{{v_{{x_{{\rm {e}}}}}} \enspace {v_{{y_{{\rm {e}}}}}} \enspace {v_{{z_{{\rm {e}}}}}}}\big]^{{\rm {T}}}}}$ is the velocity under EFCS;
{$^{{\rm {e}}}{\bf {a}} = {{\bf {g}}}+\frac{1}{m}\left({}^{{\rm {e}}}{\bf {f}} + {}^{{\rm {e}}}{\bf {f}}_{\rm{drag}}\right)$ is the accelerate under EFCS;}
$m$ is the mass of the multicopter; 
${\bf {g}}$ is the local gravitational acceleration, ${\bf {g}} = \big[{ {0} \enspace {0} \enspace {g}}\big]^{\rm{T}}$;
${}^{{\rm {e}}}{\bf {f}}\in\mathbb{R}^{3}$ is the \textit{controllable force} of the multicopter in EFCS; 
{${}^{{\rm {e}}}{\bf {f}}_{\rm{drag}} = -{\bf{R}}_{\rm{b}}^{\rm{e}}{\bf{D}}{{\bf{R}}_{\rm{b}}^{\rm{e}}}^{\rm{T}} {}^{{\rm {e}}}{\bf {v}}$ is the aerodynamic drag force of the multicopter in EFCS, ${\bf{D}} = {\rm{diag}}\left(d_x, d_y, d_z\right) $ is a constant diagonal matrix that defines the rotor-drag coefficients; }
${^{{\rm {b}}}{\bf {\omega}}={\big[{{\omega_{{x_{{\rm {b}}}}}} \enspace {\omega_{{y_{{\rm {b}}}}}} \enspace {\omega_{{z_{{\rm {b}}}}}}}\big]^{{\rm {T}}}}}$ is the angular velocity under BCS;
$\mathbf{J}$ is the multicopter moments of inertia; 
${{{\bf {G}}_{{\rm {a}}}}}$ is gyroscopic torque; 
$\mathbf{\tau}$ is the moments generated by the propellers in the body axes.
Matrix ${{\left[{^{{\rm {b}}}{\bf {\omega}}}\right]_{\times}}}$
is a skew-symmetric matrix, which is a cross product map of the angular
velocity ${^{{\rm {b}}}{\bf {\omega}}}$. 

The cross product map ${{\left[\cdot\right]_{\times}}}$:
${\mathbb{R}^{3}}\to\mathfrak{so}\left(3\right)$ is defined such
that ${{\left[{\bf {x}}\right]_{\times}}{\bf {y}}={\bf {x}}\times{\bf {y}}}$
and ${\left[{\bf {x}}\right]_{\times}^{{\rm {T}}}=-{\left[{\bf {x}}\right]_{\times}}}$
for any ${{\bf {x}},{\bf {y}}\in\mathbb{R}{^{3}}}$. The inverse of the cross product map is denoted by the vex map ${{\mathop{{\rm vex}}\nolimits}\left(\cdot\right)}$:
$\mathfrak{so}\left(3\right)\to{\mathbb{R}^{3}}$, where the set of 3 by
3 skew-symmetric matrices are denoted by $\mathfrak{so}\left(3\right)=\left\{ {{\bf {S}}\in\mathbb{R}{^{3\times3}}|{\bf {S}}=-{{\bf {S}}^{{\rm {T}}}}}\right\} $.
For ${\bf {x}}\triangleq{\big[{{x_{1}} \enspace {x_{2}} \enspace {x_{3}}}\big]^{{\rm {T}}}}\in\mathbb{R}{^{3}}$, the cross product map and vex map are summarized as
\[
{\left[{\bf {x}}\right]_{\times}}\triangleq\left[{\begin{array}{ccc}
0 & {-{x_{3}}} & {x_{2}}\\
{x_{3}} & 0 & {-{x_{1}}}\\
{-{x_{2}}} & {x_{1}} & 0
\end{array}}\right],{\rm {vex}}\left({{\left[{\bf {x}}\right]}_{\times}}\right)\triangleq{\bf {x}}.
\]

For multicopters, the controllable force is generated by the thrust of the propellers as
{\begin{equation}
	{}^{{\rm {e}}}{\bf {f}}=f{\bf {R}}_{{\rm {b}}}^{{\rm {e}}}{\bf {R}}_{{\rm {f}}}{{\bf {e}}_{3}}
	= f{\bf{n}}_{\rm{f}} \label{eq:f-multicopter}
\end{equation}}
where $0 \le f \le {f_{{\rm {m}}}}$, ${f_{{\rm {m}}}}$ is the maximum thrust;
${\bf {R}}_{{\rm {f}}}\in{\mathsf{SO}}\left ( 3 \right )$ is the rotation matrix transforming the thrust vector
to $o_{\rm{b}}z_{\rm{b}}$ axis in BCS;
${\bf {R}}_{{\rm {f}}}$ is constant for multicopters because the propellers provide thrust in a fixed direction relative to the fuselage, and variable for tiltrotor aircrafts. 
Vector ${{\bf {e}}_{3}}$ is the third column of the $3\times 3$ identity matrix ${\bf{I}}$, ${\bf{I}} = \big[{ {{\bf {e}}_{1}} \enspace {{\bf {e}}_{2}} \enspace {{\bf {e}}_{3}} }\big]$,
${{\bf{e}}_{3}} = \big[{ {0} \enspace {0} \enspace {1} }\big]^{\rm{T}}$.
{Vector ${\bf{n}}_{\rm{f}}$ is the unit vector of ${}^{{\rm {e}}}{\bf {f}}$.}

\subsubsection{Target Model}
{The target model's motion is represented using a point-mass model as
\begin{equation}
	\left\{ \begin{aligned}
		^{{\rm {e}}}\dot{{\bf {p}}}_{\rm{t}} & = {^{{\rm {e}}}{\bf {v}}}_{\rm{t}}\\
		^{{\rm {e}}}\dot{{\bf {v}}}_{\rm{t}} & = {^{{\rm {e}}}{\bf {a}}}_{\rm{t}}
	\end{aligned}
	\right.\label{eq:point-mass-model}
\end{equation}
where $^{{\rm {e}}}{\bf {p}}_{\rm{t}} \in \mathbb{{R}}^3$, $^{{\rm {e}}}{\bf {v}}_{\rm{t}} \in \mathbb{{R}}^3$ and $^{{\rm {e}}}{\bf {a}}_{\rm{t}} \in \mathbb{{R}}^3$ 
are the target's position, velocity, and acceleration under EFCS.}

{Using ${\bf {p}}_{\rm{r}}$ to indicate the relative position of the interceptor and target in EFCS, ${\bf {p}}_{\rm{r}} = {^{{\rm {e}}}{{\bf {p}}}} - {^{{\rm {e}}}{{\bf {p}}_{{\rm {t}}}}}$.
Further, the relative velocity is defined as ${\bf {v}}_{\rm{r}} = {^{{\rm {e}}}{\bf {v}} - {}^{{\rm {e}}}{{\bf {v}}_{{\rm {t}}}}}$, $ \dot{{\bf {p}}}_{\rm{r}} = {\bf {v}}_{\rm{r}}$; 
the relative acceleration is defined as ${\bf {a}}_{\rm{r}} = {^{{\rm {e}}}{\bf {a}} - {}^{{\rm {e}}}{{\bf {a}}_{{\rm {t}}}}}$, $ \dot{{\bf {v}}}_{\rm{r}} = {\bf {a}}_{\rm{r}}$.
Note that measuring the target's motion acceleration ${}^{{\rm {e}}}{{\bf {a}}_{{\rm {t}}}}$ using visual sensors is challenging. 
Therefore, in the design of both the controller and the state observer, ${}^{{\rm {e}}}{{\bf {a}}_{{\rm {t}}}}$ is considered as a disturbance.}

\subsubsection{Camera Imaging Model}
As shown in Fig. \ref{fig:Description-of-the-interception-problem}, the perspective projection performs a mapping from the 3-D space to the 2-D image plane: $\mathbb{R}{^{3}} \to \mathbb{R}{^{2}}$.

\begin{assumption}
	\label{assumption:origin-coincides}
	The origin of CCS coincides with the origin of BCS. That is, their translation
	vector ${{\bf {t}}_{{\rm {c}}}^{{\rm {b}}}={\bf {0}}}$.
	{Due to the strapdown camera configuration, the rotation matrix ${\bf {R}}_{{\rm {c}}}^{{\rm {b}}}$ is constant.}
\end{assumption}

The camera used in this paper is fixed to the body, and it is considered a successful interception when the camera is in contact with the target. 
Thus, the optical axis has a constant rotation ${\bf {R}}_{{\rm {c}}}^{{\rm {b}}}$ with the multicopter head direction.

The LOS is a line between the interceptor and the target. 
The unit vector along it from the interceptor to the target is called the \textit{LOS vector}, denoted as ${{{\bf {n}}_{{\rm {t}}}}}$.
When the target is in the sensor's Field of View (FOV), all types of sensors above can be transformed into the
perspective view through a unified imaging model \cite[Ch. 11]{corke2011robotics}.
{Vector ${{{\bf {n}}_{{\rm {t}}}}}$ is expressed as the unit vector in the negative direction of ${\bf {p}}_{\rm{r}}$, or obtained by image coordinates and focal of length $f_{{\rm {oc}}}$ as}

{\begin{equation}
	\begin{aligned}{{\bf {n}}_{{\rm {t}}}} & =-\frac{{\bf {p}}_{\rm{r}}}{{\left\Vert {\bf {p}}_{\rm{r}}\right\Vert }}\\
		& ={\bf {R}}_{{\rm {b}}}^{{\rm {e}}}{\bf {R}}_{{\rm {c}}}^{{\rm {b}}}{\frac{{\left[{	{p_{{x_{{\rm {i}}}}}} \enspace {p_{{y_{{\rm {i}}}}}} \enspace {f_{{\rm {oc}}}} }\right]}^{{\rm {T}}}}{{\left\Vert {\left[{	{p_{{x_{{\rm {i}}}}}} \enspace {p_{{y_{{\rm {i}}}}}} \enspace {f_{{\rm {oc}}}} }\right]}^{{\rm {T}}}\right\Vert }}}.
	\end{aligned}
	\label{eq:measurement model}
\end{equation}}

The FOV is the range of the observable area that a perception sensor can image, including the Horizontal Field of View (HFOV) and Vertical Field of View (VFOV). 
The parameters ${{\alpha_{{\rm {hfov}}}}}$, ${{\alpha_{{\rm {vfov}}}}}$ are the angles of HFOV and VFOV of the perception sensor. 
The FOV is defined as 
$ {\mathcal C} = \left\{ {{\bf {x}} \mid  \left | \arctan \frac{x_{\rm{c}1}}{x_{\rm{c}3}} \right | \le \frac{1}{2} \alpha_{\rm{hfov}}, \left | \arctan \frac{x_{\rm{c}2}}{x_{\rm{c}3}} \right | \le \frac{1}{2} \alpha_{\rm{vfov}}  } \right\} $,
where ${}^{\rm{c}}{\bf {x}}\triangleq{\big[{ {x_{\rm{c}1}} \enspace {x_{\rm{c}2}} \enspace {x_{\rm{c}3}} }\big]^{{\rm {T}}}}$ is ${\bf{x}}$ in CCS.

The camera imaging is subject to the FOV constraint that the target is within the sensor's FOV,
\begin{equation}
	{{\bf {n}}_{{\rm {t}}}} \in {\mathcal C}.
	\label{eq:FOV}
\end{equation}

{The designed LOS vector \({\bf{n}}_{{\rm{td}}}\) can be simply conceptualized as the optical axis vector \({\bf{n}}_{{\rm{c}}}\). In pursuit of enhancing the interceptor's maneuverability, \({\bf{n}}_{{\rm{td}}}\) is crafted to approximate the force vector \({\bf{n}}_{{\rm{f}}}\) as closely as possible within the FOV. Upon achieving the collinear control objective, a greater component of acceleration is directed along \({\bf{n}}_{{\rm{td}}}\), thereby endowing the interceptor with increased interception velocity.}



\subsubsection{IBVS Model}

The image Jacobian matrix \cite{chaumette2006visual} can represent
the relationship between camera motion and image features motion as
\begin{equation}
\setlength{\arraycolsep}{3.8pt}\begin{aligned}^{{\rm {i}}}{\dot {\bar{\bf{p}}}}\! & =\!{{\bf {L}}_{s}}{}^{{\rm {c}}}{\bf {\tilde{v}}},\\
{{\bf {L}}_{s}}\! & =\!\left[{\begin{array}{cccccc}
{-\frac{1}{{p_{{z_{{\rm {c}}}}}}}} & 0 & {\frac{{{\bar{{p}}}_{{x_{{\rm {i}}}}}}}{{p_{{z_{{\rm {c}}}}}}}} & {{{\bar{p}}_{{x_{{\rm {i}}}}}}{{\bar{p}}_{{y_{{\rm {i}}}}}}} & {-\!\left({1\!+\!\bar{p}_{{x_{{\rm {i}}}}}^{2}}\right)} & {{\bar{p}}_{{y_{{\rm {i}}}}}}\\
0 & {-\frac{1}{{p_{{z_{{\rm {c}}}}}}}} & {\frac{{{\bar{p}}_{{y_{{\rm {i}}}}}}}{{p_{{z_{{\rm {c}}}}}}}} & {1\!+\!\bar{p}_{{y_{{\rm {i}}}}}^{2}} & {-\!{{\bar{p}}_{{x_{{\rm {i}}}}}}{{\bar{p}}_{{y_{{\rm {i}}}}}}} & {-\!{{\bar{p}}_{{x_{{\rm {i}}}}}}}
\end{array}}\right]
\end{aligned}
\label{eq:IBVS}
\end{equation}
where $^{{\rm {c}}}{\bf {\tilde{v}}}\!=\!{{\left[{\begin{array}{cc}
^{{\rm {c}}}{\bf {v}}^{{\rm {T}}} & ^{{\rm {c}}}{\bf {\omega}}^{{\rm {T}}}\end{array}}\right]}^{{\rm {T}}}}$ represents the twist in CCS, including the
linear and angular velocity of a rigid body; $^{{\rm {i}}}{\bf {\bar{p}}}={\left[{\begin{array}{cc}
{{\bar{p}}_{{x_{{\rm {i}}}}}} & {{\bar{p}}_{{y_{{\rm {i}}}}}}\end{array}}\right]^{{\rm {T}}}} = {\left[{\begin{array}{cc} {p_{{x_{{\rm {i}}}}}}/{f_{{\rm {oc}}}} & {p_{{y_{{\rm {i}}}}}}/{f_{{\rm {oc}}}}\end{array}}\right]^{{\rm {T}}}}$ represents the normalized image coordinate of features in ICS.

\subsection{State Equation and Measurement Equations}

In addition to the multicopter rigid model and the IBVS model, the description of the multicopter interception process requires knowledge of the measurement equations, including IMU measurement and delayed image measurement.

\subsubsection{System Model}

Logarithmic mapping $\ln\left(\cdot\right):{\mathop{{\mathsf{SO}}}\nolimits}\left(3\right)\to\mathfrak{so}\left(3\right)$
converts rotation matrix ${{\bf {R}}_{{\rm {b}}}^{{\rm {e}}}}$
into a rotation angle $\theta\in\left[-\pi,\pi\right)$ and a unit
rotation vector ${{\bf {a}}}\in\mathbb{{R}}{}^{3}$, that
is
\[
{\left[{\theta{\bf {a}}}\right]_{\times}}=\ln\left({{\bf {R}}_{{\rm {b}}}^{{\rm {e}}}}\right)=\sum\limits _{i=0}^{\infty}{\frac{{{\left({-1}\right)}^{i}}}{{i+1}}{{\left({{\bf {R}}_{{\rm {b}}}^{{\rm {e}}}-{\bf {I}}}\right)}^{i+1}}}.
\]

A 4-D vector is used to compactly represent interceptor's rotation matrix ${{\bf {R}}_{{\rm {b}}}^{{\rm {e}}}}$, called orientation quaternion ${{{\bf {q}}={\big[{\begin{array}{cc}
{\cos\frac{\theta}{2}} & {{{\bf {n}}^{{\rm {T}}}}\sin\frac{\theta}{2}}\end{array}}\big]^{{\rm {T}}}}\in\mathbb{{R}}{^{4}}}}$.
The normalized image coordinate of the center of the target is considered the feature point and is denoted as ${^{{\rm {i}}}{\bf {\bar{p}}}}$.
The state variable ${\bf {x}}={\big[{\begin{array}{cccccc}
{{\bf {q}}^{{\rm {T}}}} & {{\bf {p}}_{\rm{r}}^{{\rm {T}}}} & {{\bf {v}}_{\rm{r}}^{{\rm {T}}}} & {^{{\rm {i}}}{{\bf {\bar{p}}}^{{\rm {T}}}}} & {{\bf {b}}_{{\rm {gyr}}}^{{\rm {T}}}} & {{\bf {b}}_{{\rm {acc}}}^{{\rm {T}}}}\end{array}}\big]^{{\rm {T}}}}\in\mathbb{{R}}{^{18}}$ consists of some necessary states, including interceptor's orientation quaternion ${{\bf {q}}}$, relative position ${\bf {p}}_{\rm{r}}$
and relative velocity ${{\bf {v}}}_{\rm{r}}$,
image feature point ${^{{\rm {i}}}{\bf {\bar{p}}}}$, gyroscope bias ${{\bf {b}}_{{\rm {gyr}}}}={\big[{\begin{array}{ccc}
{b_{{\rm {gyr}},x}} & {b_{{\rm {gyr}},y}} & {b_{{\rm {gyr}},z}}\end{array}}\big]^{{\rm {T}}}}$, and accelerometer bias ${{{\bf {b}}_{{\rm {acc}}}}={\big[{\begin{array}{ccc}
{b_{{\rm {acc}},x}} & {b_{{\rm {acc}},y}} & {b_{{\rm {acc}},z}}\end{array}}\big]^{{\rm {T}}}}}$.

The discrete state equation is written as
\[
{\bf {x}}_{k}={\bf {F}}_{k}{\bf {x}}_{k-1}+{\bf {G}}_{k}{\bf {w}}_{k-1}
\]
where ${\bf {F}}_{k}\in\mathbb{{R}}{}^{18\times18}$ and ${\bf {G}}_{k}\in\mathbb{{R}}{}^{18\times6}$
are the state transition matrix and the noise matrix from time ${{t_{k-1}}}$
to time ${{t_{k}}}$; ${\bf {w}}_{k}$ is the process noise
vector, where the elements follow the Gaussian distribution with zero
mean and uncorrelated, ${\bf {w}}_{k}\sim{\cal N}\left({{{\bf {0}}_{6\times1}},{{\bf {Q}}_{k}}}\right),{{\bf {Q}}_{k}}\in\mathbb{{R}}{}^{6\times6}$.
Vector ${\bf {w}}_{k}$ consists of gyroscope bias noise ${{\bf {n}}_{{{\bf {b}}_{{\rm {gyr}}}}}}$ and accelerometer bias noise ${{\bf {n}}_{{{\bf {b}}_{{\rm {acc}}}}}}$, ${\bf {w}}_{k} = \big[{\begin{array}{cc}{{\bf {n}}_{{{\bf {b}}_{{\rm {gyr}}}}}^{{\rm {T}}}} & {{\bf {n}}_{{{\bf {b}}_{{\rm {acc}}}}}^{{\rm {T}}}}\end{array}}\big]^{{\rm {T}}}$.

\subsubsection{IMU Measurements}

IMU provides gyroscope angle increment and accelerometer velocity increment under BCS. 
Combining these measurements with the multicopter rigid model \eqref{eq:rigid-body-model}, the recursive expressions of states ${{\bf {q}}}$ and ${{\bf {v}}}_{\rm{r}}$ are obtained.
\begin{enumerate}[(i)]
\item The derivative of the orientation quaternion ${{\bf {q}}}$,
which is the angular velocity $^{{\rm {b}}}{\bf {\omega}}$, is obtained
from the gyroscope measurement as
\begin{equation}
^{{\rm {b}}}{\bf {\omega}}={{\bf {\omega}}_{{\rm {gyr}}}}-{{\bf {b}}_{{\rm {gyr}}}}-{{\bf {n}}_{{\rm {gyr}}}}\label{eq:gyr-measurement}
\end{equation}
where the gyroscope measurement noise ${{\bf {n}}_{{\rm {gyr}}}}$ is a zero-mean Gaussian random variable; the bias ${{{\bf {b}}_{{\rm {gyr}}}}}$ obeys the Wiener process,
\begin{equation}
{{{\dot {\bf{b}}}_{{\rm {gyr}}}}={{\bf {n}}_{{{\bf {b}}_{{\rm {gyr}}}}}}}.\label{eq:dotb_gyr}
\end{equation}

\item The derivative of the velocity ${{\bf {v}}}_{\rm{r}}$, which is
the acceleration $^{{\rm {e}}}{\bf {a}}$, is obtained from the measured
value of the accelerometer as
\begin{equation}
^{{\rm {e}}}{\bf {a}}=g{{\bf {e}}_{3}}+{\bf {R}}_{{\rm {b}}}^{{\rm {e}}}\left({{{\bf {a}}_{{\rm {acc}}}}-{{\bf {b}}_{{\rm {acc}}}}-{{\bf {n}}_{{\rm {acc}}}}}\right)\label{eq:ve-and-higher-order-state}
\end{equation}
where the accelerometer measurement noise ${{\bf {n}}_{{\rm {acc}}}}$ is a zero-mean Gaussian random variable; the bias ${{{\bf {b}}_{{\rm {acc}}}}}$ obeys the Wiener process,
\begin{equation}
{{{\dot {\bf{b}}}_{{\rm {acc}}}}={{\bf {n}}_{{{\bf {b}}_{{\rm {acc}}}}}}}.\label{eq:dotb_acc}
\end{equation}
\end{enumerate}

\subsubsection{Delayed Image Measurements}

Image processing provides the image coordinates of the target point,
and the state ${^{{\rm {i}}}{\bf {\bar{p}}}}$ is obtained
after normalization. Since camera imaging and image processing
take time, the measured value obtained is data from a period ago.
Suppose that the measured delay is $D$ cycles based on the IMU update frequency, and the delay time ${{t_{D}}>0}$ is known. Then the image measurement at time $t$ is expressed as
\begin{equation}
{^{{\rm {i}}}{{\bf {\bar{p}}}_{{\rm {img}},k}}={}^{{\rm {i}}}{\bf {\bar{p}}}_{k-D}+{{\bf {n}}_{{\rm {img}}}}}\label{eq:p^bar_img,k}
\end{equation}
where the image measurement noise ${{\bf {n}}_{{\rm {img}}}}$ is a zero-mean Gaussian random variable.
{Note that the delay time ${t_{D}}$ involves camera imaging and image processing time and was estimated using two methods: (i) direct calculation based on camera parameters and internal program timers, and (ii) experimental measurement involving capturing a timer with the camera.}

\subsection{Controller Design and State Observer Problem Formulation}


Based on \textit{Assumption \ref{assumption:origin-coincides}}, the controller design and state observer problems are summarized as follows.

\begin{problem}
	\textit{(Controller Design Problem)}
	\label{problem:Controller-design}
	Under \textit{Assumption \ref{assumption:origin-coincides}}, it is assumed that the state information, including the multicopter and the image measurement, is accurately available. At the same time, the positions of target and interceptor in the inertial system are unknown. 
	{For the interceptor model \eqref{eq:rigid-body-model}, design controllers for $f$, $^{{\rm {b}}}{\bf {\omega}}$ such that the target unit vector remains proximate to the designed LOS vector, i.e., $1-{\bf{n}}_{{\rm{td}}}^{{\rm {T}}}{{\bf {n}}_{{\rm {t}}}} < \epsilon$, where $\epsilon>0$ is a range threshold, and the interceptor multicopter further approaches the target, i.e., ${\bf {p}}_{\rm{r}} \to {\bf{0}}$.}
\end{problem}

\begin{problem}
	\textit{(State Observer Problem)}
	\label{problem:State-observer}
	Given the initial value ${{\bf {x}}\left(0\right)}$, model error and measurement error distribution, how to use IMU measurement and delayed image measurement to calculate more accurate status information.
\end{problem}

In \textit{Problem \ref{problem:Controller-design}}, a controller is designed to perform the interception task with accurate image measurement and IMU information feedback. However, due to the time required for camera imaging and processing, around \SI{80}{ms} for color recognition in our experiments, significant errors can occur in high-speed flights. To mitigate control quality degradation from such delays, image estimation methods are essential. The challenge of choosing a suitable state vector $\mathbf{x}$ and estimating the position of image features in real-time with a filter is addressed in \textit{Problem \ref{problem:State-observer}}.

\section{High-speed Interception Control based on Rotation Matrix\label{sec:high-speed-interception-control}}

In this section, \textit{Problem \ref{problem:Controller-design}} is solved by designing the angular velocity controller so that the designed LOS vector ${{{\bf {n}}_{{\rm {td}}}}}$, the velocity vector ${\bf {v}}_{\rm {r}}$ and the target unit vector ${{{\bf {n}}_{{\rm {t}}}}}$ are collinear.
{The detailed derivations and proof of the theorem in this section are placed in \textit{Appendix \ref{subsec:appendix-proof}}.}

\subsection{High-speed interception control process}

\begin{figure}
	\centering{}\includegraphics[scale=0.9]{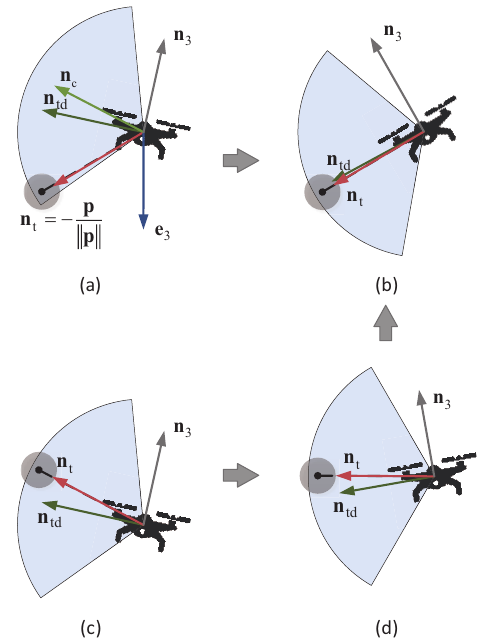}\caption{\label{fig:High-speed-interception-control}High-speed interception
	control process. The process for intercepting a lower target is (a)--(b),
	and the process for intercepting a higher target is (c)--(d)--(b).}
	\vspace{-0.3cm}
\end{figure}

Figures \ref{fig:High-speed-interception-control}(a) to \ref{fig:High-speed-interception-control}(c) illustrate the design of the attitude rate loop controller, aiming to align the designed LOS vector ${\bf{n}}_{{\rm{td}}}$ and the target unit vector ${{{\bf {n}}_{{\rm {t}}}}}$ for effective target locking and interception. However, in initial states where the target is above the interceptor, as shown in Fig. \ref{fig:High-speed-interception-control}(c), this alignment alone can cause backward acceleration. Given the multicopter's underactuated nature, where acceleration and attitude are coupled, it's crucial to also ensure the collinearity of the velocity vector ${\bf {v}}_{\rm {r}}$ and the target unit vector ${{{\bf {n}}_{{\rm {t}}}}}$. This approach allows for the calculation of desired force ${\bf{f}}_{{\rm{d}}}$ and rotation matrix ${\bf{R}}_{{\rm{d}}}$ in achieving the desired velocity.

The controller follows two main objectives: ensuring the collinearity of the LOS vector and the target unit vector, and aligning the velocity vector with the target unit vector. The high-speed interception controller comprises two parts: a collinear controller (Section \ref{subsec:Control-Design-for-nh-nt}) as the outer loop running at \SI{50}{\hertz}, and an attitude controller (Section \ref{subsec:Control-Design-for-ve-nt}) as the inner loop running at \SI{200}{\hertz}. The collinear controller ensures the target remains within FOV, while the attitude controller guarantees that the multicopter's velocity and attitude accurately track the reference values from the outer loop controller. This controller is outlined in Fig. \ref{fig:Control scheme} and \textit{Algorithm \ref{alg:Controller}}.

\begin{figure}
	\centering{}\includegraphics{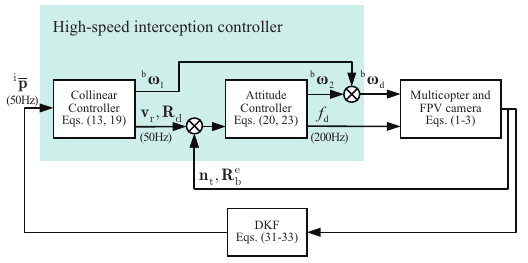}
	
	\caption{\label{fig:Control scheme}Entire control scheme. The high-speed interception
	controller is divided into a collinear controller in the outer
	loop and an attitude controller in the inner loop.}
\end{figure}

\begin{algorithm}[t]
	\caption{\label{alg:Controller}{$f_{\rm{d}}$, ${\bf {\omega}}_{\rm{d}}$ = InterceptionController($^{{\rm {i}}}{\bf {\bar{p}}}$,
	${\bf {v}}_{\rm {r}}$, ${\bf {R}}_{{\rm {b}}}^{{\rm {e}}}$)}}
	{
	\begin{algorithmic}[1] 
	\Require{Image measurement, relative velocity and self attitude} 
	\Ensure{Interception controller commands} 
	\State{Initialize ${{\bf {a}}}_{\rm{d}} = \bf{0}$, ${\bf {R}}_{{\rm {d}}} = {\bf {R}}_{{\rm {b}}}^{{\rm {e}}}$}
	\While{Interception in progress}
		\State{Calculate ${\bf {n}}_{{\rm {t}}}$ from $^{{\rm {i}}}{\bf {\bar{p}}}$ and Eq. \eqref{eq:measurement model}}
		\State{Calculate the attitude controller commands $f_{\rm{d}}$, ${\bf {\omega}}_{\rm{d}}$ according to Eqs. \eqref{eq:fd}, \eqref{eq:w2}}
		\If{${}^{\rm{i}}{{\bf{\bar p}}}$ comes}
			\State{Calculate the collinear controller commands ${\bf {\omega}}_{1}$ and attitude controller desired value ${{\bf {a}}_{{\rm {d}}}}$, ${\bf {R}}_{{\rm {d}}}$ according to Eqs. \eqref{eq:w1}, \eqref{eq:ad}, \eqref{eq:Rd}}
			\State{${\bf {\omega}}_{\rm{d}} = {\bf {\omega}}_{\rm{d}} + {\bf {\omega}}_{1}$}
		\EndIf
		\State{wait($T$). $T$ = iteration time - this loop time}
	\EndWhile
	\end{algorithmic}}
\end{algorithm}

\subsection{Collinear Control Design for designed LOS vector and Target Unit
Vector\label{subsec:Control-Design-for-nh-nt}}
{
Let LOS tracking error as $z_1 = 1-{\bf{n}}_{{\rm{td}}}^{{\rm {T}}}{{\bf {n}}_{{\rm {t}}}}$ and take the derivative of it, we have
\[
\begin{aligned}{{\dot{z}}_{1}} & =-\left({{\bf {n}}_{{\rm {td}}}^{{\rm {T}}}{{\dot {\bf{n}}}_{{\rm {t}}}}+{\bf {n}}_{{\rm {t}}}^{{\rm {T}}}{{\dot {\bf{n}}}_{{\rm {td}}}}}\right)\\
 & =-\frac{1}{{\left\Vert {\bf {p}}_{\rm{r}}\right\Vert }}{\bf {n}}_{{\rm {td}}}^{{\rm {T}}}\left({-{\bf {I}}+{{\bf {n}}_{{\rm {t}}}}{\bf {n}}_{{\rm {t}}}^{{\rm {T}}}}\right){{\bf {v}}_{\rm{r}}}-{\left( {{{\bf{n}}_{{\rm{td}}}} \!\times\! {{\bf{n}}_{\rm{t}}}} \right)^{\rm{T}}}{}^{\rm{e}}\omega.
\end{aligned}
\]
Choose the following barrier Lyapunov candidate function,
\begin{equation}
	{L_1} = \frac{1}{2}\log \frac{{k_{\rm{b}}^2}}{{k_{\rm{b}}^2 - z_1^2}}\label{eq:L1}
\end{equation}
where $\log\left(\cdot\right)$ denoted the natural logarithm, and $k_{\rm{b}}$ constant on $z_1$, that is, we require $\left\lvert z_1\right\rvert < k_{\rm{b}}$.
The derivative of Eq. \eqref{eq:L1} along Eq. \eqref{eq:no_dot} is
\[
	\begin{small}
		\begin{aligned}{{\dot{L}}_{1}} & =\frac{z_1 \dot{z}_1}{{k_{\rm{b}}^2 - z_1^2}}\\
			& =- \frac{{{z_1}}}{{k_{\rm{b}}^2 \!-\! z_1^2}}\frac{1}{{\left\| {{{\bf{p}}_{\rm{r}}}} \right\|}}{\bf{n}}_{{\rm{td}}}^{\rm{T}}\left( { - {\bf{I}} \!+\! {{\bf{n}}_{\rm{t}}}{\bf{n}}_{\rm{t}}^{\rm{T}}} \right){{\bf{v}}_{\rm{r}}} \!-\! \frac{{{z_1}}}{{k_{\rm{b}}^2 - z_1^2}}{\left( {{{\bf{n}}_{{\rm{td}}}} \!\times\! {{\bf{n}}_{\rm{t}}}} \right)^{\rm{T}}}{}^{\rm{e}}\omega.
		   \end{aligned}
	\end{small}
\]
The first term is equal to 0 when ${\bf {v}}_{\rm {r}}$ and ${{{\bf {n}}_{{\rm {t}}}}}$ are collinear, which will be designed in the following subsection. 
The designed angular velocity makes the second term negative. 
For control purposes, designed LOS vector ${\bf{n}}_{{\rm{td}}}$ and target unit vector ${{{\bf {n}}_{{\rm {t}}}}}$ are collinear. Then
\begin{equation}
\begin{aligned}^{{\rm {e}}}{\bf {\omega}} & =\frac{{{z_1}}}{{k_{\rm{b}}^2 \!-\! z_1^2}}\left({{\bf{n}}_{{\rm{td}}}\times{{\bf {n}}_{{\rm {t}}}}}\right)\\
\Rightarrow{}^{{\rm {b}}}{{\bf {\omega}}_{1}} & =\frac{{{z_1}}}{{k_{\rm{b}}^2 \!-\! z_1^2}}{\bf {R}}{_{{\rm {b}}}^{{\rm {e}}}}^{{\rm {T}}}\left({{\bf{n}}_{{\rm{td}}}\times{{\bf {n}}_{{\rm {t}}}}}\right).
\end{aligned}
\label{eq:w1}
\end{equation}
According to the physical meaning of the vector cross product, this control command represents the rotational angular velocity from ${\bf{n}}_{{\rm{td}}}$ to ${{{\bf {n}}_{{\rm {t}}}}}$. The term $\frac{{{z_1}}}{{k_{\rm{b}}^2 \!-\! z_1^2}}$ acts as a proportional coefficient, which becomes larger as the LOS tracking error $z_1$ approaches the barrier $k_{\rm{b}}$.}

\subsection{Collinear Control Design for Velocity Vector and Target Unit Vector\label{subsec:Control-Design-for-ve-nt}}
{
The Lyapunov candidate function is designed as
\begin{equation}
	{L_{2}} = \frac{1}{2} {\bf {p}}_{\rm{r}}^{\rm{T}} {\bf {p}}_{\rm{r}} \ge 0.\label{eq:L2}
\end{equation}
The derivative of Eq. \eqref{eq:L2} is
\begin{equation}
	{{\dot{L}}_{2}} = {\bf {p}}_{\rm{r}}^{\rm{T}} {\bf {v}}_{\rm{r}}.\label{eq:L2_dot}
\end{equation}
According to Lyapunov's second method, in order to make the system stable, ${{\dot{L}}_{2}}$ is expected to be negative definite.
Let ${\bf {v}}_{\rm{rd}} = -k_1 {\bf {p}}_{\rm{r}} = -k_1 \left\lVert {\bf {p}}_{\rm{r}}\right\rVert {\bf {n}}_{\rm {t}}$ with $k_1$ being a positive constant, which means that desired relative velocity vector ${{\bf {v}}_{{\rm {r}}}}$ is collinear with target unit vector ${{{\bf {n}}_{{\rm {t}}}}}$. 
Then, rewrite \eqref{eq:L2_dot} as:
\begin{equation}
	\begin{aligned}
		{{\dot{L}}_{2}} 
		&= {\bf {p}}_{\rm{r}}^{\rm{T}} \left({\bf {v}}_{\rm{r}} - {\bf {v}}_{\rm{rd}} + {\bf {v}}_{\rm{rd}}\right)  \\
		&= {\bf {p}}_{\rm{r}}^{\rm{T}} {\bf {z}}_2 - k_1 {\bf {p}}_{\rm{r}}^{\rm{T}} {\bf {p}}_{\rm{r}}.
		\label{eq:L2_dot_rewritten}
	\end{aligned}
\end{equation}
Let a tracking error as ${\bf {z}}_2 = {\bf {v}}_{\rm{r}} - {\bf {v}}_{\rm{rd}} = {\bf {v}}_{\rm{r}} + k_1 {\bf {p}}_{\rm{r}}$ and take the derivative of it, we have
\begin{equation*}
	{\dot{\bf {z}}}_2 = {\bf {a}}_{\rm{r}} + k_1 {\bf {v}}_{\rm{r}}.
\end{equation*}
}

{
Define a new Lyapunov candidate function as:
\begin{equation}
	{L_{3}} = {L_{1}} + {L_{2}} + \frac{1}{2} {\bf {z}}_{2}^{\rm{T}} {\bf {z}}_{2} \ge 0.\label{eq:L3}
\end{equation}
The derivative of Eq. \eqref{eq:L3} is
\begin{equation}
	\begin{small}
		\begin{aligned}
			{{\dot{L}}_{3}} \!=\! &- \frac{{{z_1}}}{{k_{\rm{b}}^2 \!-\! z_1^2}}\frac{1}{{\left\| {{{\bf{p}}_{\rm{r}}}} \right\|}}{\bf{n}}_{{\rm{td}}}^{\rm{T}}\left( { - {\bf{I}} \!+\! {{\bf{n}}_{\rm{t}}}{\bf{n}}_{\rm{t}}^{\rm{T}}} \right){\bf {z}}_2 - \frac{{{z_1}}}{{k_{\rm{b}}^2 \!-\! z_1^2}}{\left( {{{\bf{n}}_{{\rm{td}}}} \!\!\times\!\! {{\bf{n}}_{\rm{t}}}} \right)^{\rm{T}}}{}^{\rm{e}}\omega \\
			 &+ {\bf {p}}_{\rm{r}}^{\rm{T}} {\bf {z}}_2 \!-\! k_1 {\bf {p}}_{\rm{r}}^{\rm{T}} {\bf {p}}_{\rm{r}} \!+\! {\bf {z}}_{2}^{\rm{T}} \left({\bf {a}}_{\rm{r}} \!+\! k_1 {\bf {v}}_{\rm{r}} \right). 
		\end{aligned}
	\end{small}
\end{equation}}

{
In order for the system to be stable, ${{\dot{L}}_{3}}$ is expected to be negative definite. 
Thus, let ${{{\bf {a}}_{{\rm {r}}}}} = -k_1 {\bf {v}}_{\rm{r}} - k_2 {\bf {z}}_{2} - {\bf {p}}_{\rm{r}} + \frac{{{z_1}}}{{k_{\rm{b}}^2 - z_1^2}}\frac{m}{{\left\| {{{\bf{p}}_{\rm{r}}}} \right\|}}\left( { - {\bf{I}} + {{\bf{n}}_{\rm{t}}}{\bf{n}}_{\rm{t}}^{\rm{T}}} \right){{\bf{n}}_{{\rm{td}}}}$ and $k_2$ is a positive constant.
The desired acceleration ${{{\bf {a}}_{{\rm {d}}}}}$ is designed as
\begin{equation}
{{\bf {a}}_{{\rm {d}}}} = -k_1 {\bf {v}}_{\rm{r}} - k_2 {\bf {z}}_{2} - {\bf {p}}_{\rm{r}} + \frac{{{z_1}}}{{k_{\rm{b}}^2 \!-\! z_1^2}}\frac{m}{{\left\| {{{\bf{p}}_{\rm{r}}}} \right\|}}\left( { -\! {\bf{I}} \!+\! {{\bf{n}}_{\rm{t}}}{\bf{n}}_{\rm{t}}^{\rm{T}}} \right){{\bf{n}}_{{\rm{td}}}} + {}^{{\rm {e}}}{{\bf {a}}_{{\rm {t}}}}.\label{eq:ad}
\end{equation}}

\subsubsection{Control Design for Desired Acceleration ${{\bf {a}}_{{\rm {d}}}}$}

The desired acceleration ${{{\bf {a}}_{{\rm {d}}}}}$ is designed to realize the difference between the actual velocity and the desired velocity. According to the multicopter dynamics model \eqref{eq:rigid-body-model}, we have
{
\begin{equation}
{{\bf {a}}_{{\rm {d}}}}={{\bf {g}}} + \frac{f}{m}{\bf {n}_{{\rm {fd}}}} + \frac{1}{m}{}^{{\rm {e}}}{\bf {f}}_{\rm{drag}}.\label{eq:ad-model}
\end{equation}
Then,
\[
{\bf {n}_{{\rm {fd}}}}=\frac{{m{{{\bf {a}}_{{\rm {d}}}} - m{{\bf {g}}}}} - {}^{{\rm {e}}}{\bf {f}}_{\rm{drag}}}{{f_{{\rm {d}}}}}.
\]
Because ${\bf {n}_{{\rm {fd}}}}$ satisfies ${{\bf {n}}_{{\rm {fd}}}^{{\rm {T}}}{\bf {n}_{{\rm {fd}}}}=1}$,
the equation above becomes
\begin{equation}
{\bf {n}_{{\rm {fd}}}}=\frac{{{{\bf {a}}_{{\rm {d}}}}-{{\bf {g}}}} - {}^{{\rm {e}}}{\bf {f}}_{\rm{drag}}/m}{{\left\Vert {{{\bf {a}}_{{\rm {d}}}}-{{\bf {g}}}} - {}^{{\rm {e}}}{\bf {f}}_{\rm{drag}}/m\right\Vert }}.\label{eq:nf}
\end{equation}}

Since the yaw motion of the interceptor will cause the target to move laterally on the image, it is better to maintain the current yaw to achieve the desired attitude ${{{\bf {R}}_{{\rm {d}}}}}$.
A rotation that includes only pitch-roll is defined as ${{{\bf {R}}_{{\rm {tilt}}}}}$, and the calculation of ${{{\bf {R}}_{{\rm {tilt}}}}}$ takes the cross product of the current frame and the desired frame \cite{yu2015high}. 
{The cross product of ${\bf {n}_{{\rm {f}}}}$ and ${\bf {n}_{{\rm {fd}}}}$ is the rotation axis ${\bf {r}}$ of ${{{\bf {R}}_{{\rm {tilt}}}}}$, ${\bf{r}} = {{\bf{n}}_{\rm{f}}} \times {{\bf{n}}_{{\rm{fd}}}}$. 
And the angle $\phi$ between these two axes is the rotation angle, $\phi  = \arccos \left( {{\bf{n}}_{\rm{f}}^{\rm{T}}{{\bf{n}}_{{\rm{fd}}}}} \right)$. }
Therefore, desired attitude of interceptor ${{{\bf {R}}_{{\rm {d}}}}}$ is obtained as
{\begin{equation}
	\begin{aligned}
		{\bf {R}}_{{\rm {d}}} &= {{{\bf {R}}_{{\rm {tilt}}}}}{\bf {R}}_{{\rm {b}}}^{{\rm {e}}}, \\
		{{{\bf {R}}_{{\rm {tilt}}}}} &= e^{\phi\left[{\bf {r}}\right]_{\times}} = {\bf {I}}+\left[{\bf {r}}\right]_{\times}\sin\phi+\left[{\bf {r}}\right]_{\times}^{2}\left(1-\cos\phi\right)
	\end{aligned}
\label{eq:Rd}
\end{equation}}
where the exponential map $e^{\left(\cdot\right)}:\mathfrak{so}\left(3\right)\to{\mathsf{SO}}\left(3\right)$ converts the rotation angle $\phi\in\left[-\pi,\pi\right)$ and unit rotation vector ${\bf {r}}\in\mathbb{{R}}{}^{3}$ into a rotation matrix ${{{\bf {R}}_{{\rm {tilt}}}}}$.

Combining Eqs. \eqref{eq:ad} and \eqref{eq:nf}, the desired thrust is calculated as
{
\[
{f_{{\rm {d}}}}={\bf {n}}_{{\rm {f}}}^{{\rm {T}}}\left(m{{{\bf {a}}_{{\rm {d}}}} - m{{\bf {g}}}} - {}^{{\rm {e}}}{\bf {f}}_{\rm{drag}}\right).
\]
In practice, the thrust of a multicopter cannot be negative. Therefore, a saturation limit must be added as
\begin{equation}
	{f_{{\rm {d}}}} =\min\left(\max\left({{\bf {n}}_{{\rm {f}}}^{{\rm {T}}}\left({{\bf {a}}_{{\rm {d}}}} \!-\! m{{\bf {g}}} \!-\! {}^{{\rm {e}}}{\bf {f}}_{\rm{drag}}\right),0}\right),f_{{\rm {m}}}\right).\label{eq:fd}
\end{equation}}

\subsubsection{Control Design for Desired Attitude Matrix ${{{\bf {R}}_{{\rm {d}}}}}$}

The Lyapunov candidate function is designed as
\begin{equation}
{L_{4}}={\mathop{{\rm tr}}\nolimits}\left({{\bf {I}}-{\bf {R}}_{{\rm {d}}}^{{\rm {T}}}{\bf {R}}_{{\rm {b}}}^{{\rm {e}}}}\right).\label{eq:L4}
\end{equation}
When the actual attitude ${{\bf {R}}_{{\rm {b}}}^{{\rm {e}}}}$ of the interceptor is consistent with the desired attitude ${{{\bf {R}}_{{\rm {d}}}}}$, ${\bf {R}}_{{\rm {d}}}^{{\rm {T}}}{\bf {R}}_{{\rm {b}}}^{{\rm {e}}}={\bf {I}}$.
The Frobenius norm is used to define the matrix error,
\[
\begin{aligned}{\left\Vert {{\bf {I}}-{\bf {R}}_{{\rm {d}}}^{{\rm {T}}}{\bf {R}}_{{\rm {b}}}^{{\rm {e}}}}\right\Vert _{F}} & =\sqrt{{\mathop{{\rm tr}}\nolimits}\left({\left({{\bf {I}}-{\bf {R}}_{{\rm {d}}}^{{\rm {T}}}{\bf {R}}_{{\rm {b}}}^{{\rm {e}}}}\right){{\left({{\bf {I}}-{\bf {R}}_{{\rm {d}}}^{{\rm {T}}}{\bf {R}}_{{\rm {b}}}^{{\rm {e}}}}\right)}^{{\rm {T}}}}}\right)}\\
 & =\sqrt{2{\mathop{{\rm tr}}\nolimits}\left({{\bf {I}}-{\bf {R}}_{{\rm {d}}}^{{\rm {T}}}{\bf {R}}_{{\rm {b}}}^{{\rm {e}}}}\right)}.
\end{aligned}
\]
So, ${{L_{4}}=\left\Vert {{\bf {I}}-{\bf {R}}_{{\rm {d}}}^{{\rm {T}}}{\bf {R}}_{{\rm {b}}}^{{\rm {e}}}}\right\Vert _{F}^{2}/2\ge0}$.
The derivative of Eq. \eqref{eq:L4} along Eq. \eqref{eq:rigid-body-model} is \cite{murray2017mathematical,lee2010geometric}
\begin{equation}
\begin{aligned}{{\dot{L}}_{4}} & =-{\mathop{{\rm tr}}\nolimits}\left({{\bf {R}}_{{\rm {d}}}^{{\rm {T}}}{\bf {R}}_{{\rm {b}}}^{{\rm {e}}}{{\left[{^{{\rm {b}}}{\bf {\omega}}}\right]}_{\times}}}\right)\\
 & ={\mathop{{\rm vex}}\nolimits}\left({{\bf {R}}_{{\rm {d}}}^{{\rm {T}}}{\bf {R}}_{{\rm {b}}}^{{\rm {e}}}-{\bf {R}}{{_{{\rm {b}}}^{{\rm {e}}}}^{{\rm {T}}}}{{\bf {R}}_{{\rm {d}}}}}\right)^{\rm {T}} {}^{{\rm {b}}}{\bf {\omega}}.
\end{aligned}
\label{eq:L4_dot}
\end{equation}

The angular velocity controller is designed as
\begin{equation}
^{{\rm {b}}}{{\bf {\omega}}_{2}}=-{\mathop{{\rm vex}}\nolimits}\left({{\bf {R}}_{{\rm {d}}}^{{\rm {T}}}{\bf {R}}_ {{\rm {b}}}^{{\rm {e}}}-{\bf {R}}{{_{{\rm {b}}}^{{\rm {e}}}}^{{\rm {T}}}}{{\bf {R}}_{{\rm {d}}}}}\right)
\label{eq:w2}
\end{equation}
Then, 
\begin{equation}
{\dot{L}_{4}}=-\left\lVert {{\mathop{{\rm vex}}\nolimits}}\left({{\bf {R}}_{{\rm {d}}}^{{\rm {T}}}{\bf {R}}_{{\rm {b}}}^{{\rm {e}}}-{\bf {R}}{{_{{\rm {b}}}^{{\rm {e}}}}^{{\rm {T}}}}{{\bf {R}}_{{\rm {d}}}}}\right)\right\rVert ^{2} \le0.
\end{equation}

\subsection{Stability Analysis}

Based on the previous control objectives, the multicopter interceptor attitude loop controller is summarized as
{
\begin{equation}
\begin{small}
\left\{ \begin{aligned}^{{\rm {b}}}{{\bf {\omega}}_{{\rm {d}}}} & ={\rm {sat}}\left({}^{{\rm {b}}}{{\bf {\omega}}_{1}}+{}^{{\rm {b}}}{{\bf {\omega}}_{2}},\omega_{{\rm {m}}}\right)\\
{f_{{\rm {d}}}} & =\min\left(\max\left({{\bf {n}}_{{\rm {f}}}^{{\rm {T}}}\left({{\bf {a}}_{{\rm {d}}}} \!-\! m{{\bf {g}}} \!-\! {}^{{\rm {e}}}{\bf {f}}_{\rm{drag}}\right),0}\right),f_{{\rm {m}}}\right)
\end{aligned}
\right.
\end{small}
\label{eq:controller-summarized}
\end{equation}}
The multicopter's maximum angular velocity is represented by $\omega_{{\rm {m}}}$, where $\omega_{{\rm {m}}}>0$.
The function ${{\rm {sat}}\left(\cdot\right)}$ is a saturation function defined as \cite{Fu2023}
\begin{equation}
\begin{aligned}{\rm {sat}}\left({{\bf {a}},{a_{{\rm {m}}}}}\right) & ={\kappa_{{a_{{\rm {m}}}}}}\left({\bf {a}}\right){\bf {a}},\ {\bf {a}}\in{\mathbb{R}^{n}},\ {a_{{\rm {m}}}}\in{\mathbb{R}_{+}},\\
{\kappa_{{a_{{\rm {m}}}}}}\left({\bf {a}}\right) & =\left\{ {\begin{array}{cc}
{1,} & {\left\Vert {\bf {a}}\right\Vert \le{a_{{\rm {m}}}}}\\
{{{a_{{\rm {m}}}}} / {{\left\Vert {\bf {a}}\right\Vert }},} & {\left\Vert {\bf {a}}\right\Vert >{a_{{\rm {m}}}}}
\end{array}}\right..
\end{aligned}
\end{equation}

\begin{remark}
	The angular velocity control component ${}^{{\rm {b}}}{{\bf {\omega}}_{1}}$ is designed to keep the target within the FOV by aligning the designed LOS vector with the target unit vector. Another component, ${}^{{\rm {b}}}{{\bf {\omega}}_{2}}$, aligns the velocity vector with the target unit vector to ensure the interceptor's relative position converges to zero, achieving interception. \\
	\indent In practical applications, to ensure safety, saturation is applied to both angular velocity and thrust. The saturation function ${{\rm {sat}}\left(\cdot\right)}$ and the saturation of thrust alter only the magnitude of the control variables, not their direction, thereby affecting the convergence speed but not the system's stability.
	\vspace{-0.1cm}
\end{remark}

\begin{theorem}
	Under \textit{Assumption \ref{assumption:origin-coincides}}, for a multicopter with strapdown camera model \eqref{eq:rigid-body-model}--\eqref{eq:measurement model}, the attitude controller is designed by \eqref{eq:controller-summarized}. If the initial state satisfies ${\bf {n}}_{{\rm {t}}}\left(0\right)  \in \mathcal{S} = \left\{ {\bf {x}} \mid \left\lvert {{\bf{n}}_{{\rm{td}}}^{{\rm {T}}} {\bf {x}}} \right\rvert > 1 - k_{\rm{b}} \right\}$, then ${\bf {n}}_{{\rm {t}}} \in \mathcal{S}$, and ${\bf{p}}_{\rm{r}} \to 0$, as $t \to \infty$.
\end{theorem}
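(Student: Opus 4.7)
The plan is to combine the Lyapunov candidates introduced in Sections \ref{subsec:Control-Design-for-nh-nt} and \ref{subsec:Control-Design-for-ve-nt} into a single composite function and then invoke the barrier property together with LaSalle's invariance principle. Concretely, I would work with $V = L_3 + L_4$, where $L_3 = L_1 + L_2 + \tfrac{1}{2}\mathbf{z}_2^{\mathrm{T}}\mathbf{z}_2$ is the outer-loop barrier-plus-backstepping function and $L_4 = \operatorname{tr}(\mathbf{I} - \mathbf{R}_{\mathrm{d}}^{\mathrm{T}} \mathbf{R}_{\mathrm{b}}^{\mathrm{e}})$ is the attitude error function. The first task is to differentiate $V$ along the closed-loop dynamics obtained by substituting \eqref{eq:controller-summarized} into \eqref{eq:rigid-body-model}, so that the cross terms cancel. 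Using the derivations already performed, the outer-loop choice \eqref{eq:ad} removes the indefinite terms from $\dot{L}_3$, and $^{\mathrm{b}}\boldsymbol{\omega}_2$ in \eqref{eq:w2} makes $\dot{L}_4 \le 0$; the angular-velocity split $^{\mathrm{b}}\boldsymbol{\omega}_{\mathrm{d}} = {^{\mathrm{b}}\boldsymbol{\omega}_1} + {^{\mathrm{b}}\boldsymbol{\omega}_2}$ routes the $^{\mathrm{b}}\boldsymbol{\omega}_1$ component into the alignment term of $\dot{L}_3$. The goal is a clean bound of the form $\dot V \le -\tfrac{z_1^2}{(k_{\mathrm{b}}^2 - z_1^2)^2}\|\mathbf{n}_{\mathrm{td}} \times \mathbf{n}_{\mathrm{t}}\|^2 - k_1\|\mathbf{p}_{\mathrm{r}}\|^2 - k_2\|\mathbf{z}_2\|^2 - \|\operatorname{vex}(\mathbf{R}_{\mathrm{d}}^{\mathrm{T}} \mathbf{R}_{\mathrm{b}}^{\mathrm{e}} - {\mathbf{R}_{\mathrm{b}}^{\mathrm{e}}}^{\mathrm{T}} \mathbf{R}_{\mathrm{d}})\|^2 \le 0$.

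Given this estimate, forward invariance of $\mathcal{S}$ follows from the barrier structure of $L_1$: since $V$ is nonincreasing, $L_1(t) \le V(0) < \infty$ whenever the initial condition places $\mathbf{n}_{\mathrm{t}}(0)$ in $\mathcal{S}$, and because $L_1 \to \infty$ as $|z_1| \to k_{\mathrm{b}}^-$, the scalar $z_1(t)$ cannot reach the barrier. Translating back to the geometry of $\mathcal{S}$ gives $\mathbf{n}_{\mathrm{t}}(t) \in \mathcal{S}$ for all $t \ge 0$. The convergence claim $\mathbf{p}_{\mathrm{r}} \to \mathbf{0}$ is then obtained by LaSalle: on the largest invariant subset of $\{\dot V = 0\}$, one has $\mathbf{p}_{\mathrm{r}} \equiv 0$, $\mathbf{z}_2 \equiv 0$, $z_1 \equiv 0$, and $\mathbf{R}_{\mathrm{b}}^{\mathrm{e}} \equiv \mathbf{R}_{\mathrm{d}}$; consistency with $\dot{\mathbf{z}}_2 = \mathbf{a}_{\mathrm{r}} + k_1 \mathbf{v}_{\mathrm{r}} = 0$ forces the trajectory to sit at the desired equilibrium, so $\mathbf{p}_{\mathrm{r}}(t) \to \mathbf{0}$.

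Two technical points will require care. First, the controller is a cascade (outer loop at \SI{50}{\hertz} giving $\mathbf{R}_{\mathrm{d}}, f_{\mathrm{d}}$; inner loop at \SI{200}{\hertz} tracking them), so $\mathbf{a}_{\mathrm{r}}$ equals its designed value \eqref{eq:ad} only when $\mathbf{R}_{\mathrm{b}}^{\mathrm{e}} = \mathbf{R}_{\mathrm{d}}$ and $f = f_{\mathrm{d}}$. I would therefore treat the mismatch $f \mathbf{n}_{\mathrm{f}} - f_{\mathrm{d}} \mathbf{n}_{\mathrm{fd}}$ as a bounded perturbation that is dominated by the $-k_2 \|\mathbf{z}_2\|^2$ and $-\|\operatorname{vex}(\cdot)\|^2$ terms via a Young-inequality cross-coupling, which is the main obstacle because the attitude error enters the $\mathbf{z}_2$-dynamics multiplicatively. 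Second, the saturation operator $\mathrm{sat}(\cdot, \omega_{\mathrm{m}})$ and the thrust saturation in \eqref{eq:fd} only scale magnitudes; as noted in the Remark, this preserves the sign of $\dot V$, although the local reachable set on which $V$ decreases must be characterized so that the unsaturated analysis remains valid in a neighborhood of the barrier. Once these cascade and saturation issues are absorbed, the argument reduces to the barrier-Lyapunov--backstepping--LaSalle chain sketched above, and the theorem follows.
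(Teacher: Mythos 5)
Your proposal follows essentially the same route as the paper: the composite Lyapunov function $V=L_3+L_4$, the backstepping error ${\bf z}_2$, the barrier argument for forward invariance of $\mathcal{S}$, and convergence of ${\bf p}_{\rm r}$ driven by the $-k_1\|{\bf p}_{\rm r}\|^2-k_2\|{\bf z}_2\|^2$ terms (where you invoke LaSalle, the paper cites Lemma~1 of the barrier-Lyapunov-function reference, which packages the same boundedness-implies-constraint-satisfaction argument). One correction to your sketch: the target bound you write, with the two angular-rate contributions appearing as \emph{separate} squares $-\tfrac{z_1^2}{(k_{\rm b}^2-z_1^2)^2}\|{\bf n}_{\rm td}\times{\bf n}_{\rm t}\|^2$ and $-\|{\rm vex}(\cdot)\|^2$, is not what the closed loop delivers. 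Both contributions to $\dot V$ are linear in the same physical angular velocity ${}^{\rm b}{\boldsymbol{\omega}}$, so the paper collects them into a single vector ${\bf w}$ (essentially ${}^{\rm b}{\boldsymbol{\omega}}_1+{}^{\rm b}{\boldsymbol{\omega}}_2$) and, with ${}^{\rm b}{\boldsymbol{\omega}}={\rm sat}({\bf w},\omega_{\rm m})$, obtains the single term $-\kappa_{\omega_{\rm m}}({\bf w})\,{\bf w}^{\rm T}{\bf w}\le 0$; your separated form would additionally require the cross term ${}^{\rm b}{\boldsymbol{\omega}}_1^{\rm T}\,{}^{\rm b}{\boldsymbol{\omega}}_2\ge 0$, which is not guaranteed. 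This is harmless for the conclusion ($\dot V\le 0$ either way, and the decay of ${\bf p}_{\rm r}$ rests on the position and velocity terms), but the inequality as written should be replaced by the combined form; note also that this combined form handles the saturation \emph{exactly} through the scalar $\kappa_{\omega_{\rm m}}$, so no perturbation or Young-inequality argument is needed for it. Finally, the cascade mismatch you flag (that ${\bf a}_{\rm r}$ equals the designed \eqref{eq:ad} only when ${\bf R}_{\rm b}^{\rm e}={\bf R}_{\rm d}$ and $f=f_{\rm d}$) is a real issue, but be aware the paper's own proof does not address it either --- it substitutes the designed acceleration directly --- so your proposed dominance argument is an addition beyond, not a reconstruction of, the published proof.
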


\begin{proof}
	See \textit{Appendix \ref{subsec:appendix-proof}}.
\end{proof}

\section{Delayed Kalman Filter Observer\label{sec:Delayed-Kalman-Filter}}

In this section, we design a state observer to address \textit{Problem \ref{problem:State-observer}}, using the multicopter's state and delayed image features to estimate current image features in real-time. Our method employs an extended Kalman filter with Gaussian distributions, where the state progresses according to kinematics equations without process noise, and the covariance follows its linearized form. The approach includes both propagation and correction steps. Two main challenges are integrating delayed image measurements with concurrent IMU data during correction, and handling the rotation matrix that evolves on the compact Lie group. These are tackled by treating the integration as an optimal prediction and representing the distribution using quaternions.

\subsection{Prediction\label{subsec:Prediction}}

The distribution at $t_{k-1}$ is given by ${\bf {\hat{x}}}_{k-1}$ and ${\bf {P}}_{k-1}$. 
The goal of the prediction is to propagate them to the next time step $t_{k}$ via \eqref{eq:rigid-body-model}--\eqref{eq:dotb_acc}.
The one-step prediction of state ${\bf {x}}$ is
\begin{equation}
{\bf {\hat{x}}}_{k|k-1}={\bf {F}}_{k}{\bf {\hat{x}}}_{k-1}.\label{eq:x_hat}
\end{equation}

As part of the propagation step, the covariance matrix ${\bf P}$ is updated by the well-known Kalman filter
\begin{equation}
{\bf {P}}_{k|k-1}={\bf {F}}_{k}{\bf {P}}_{k-1}{{\bf {F}}_{k}^{{\rm {T}}}}+{\bf {G}}_{k}{\bf {Q}}_{k}{{\bf {G}}_{k}^{{\rm {T}}}}\label{eq:P_k_k-1}
\end{equation}
where ${\bf {F}}_{k}\in\mathbb{{R}}{}^{18\times18}$ and ${\bf {G}}_{k}\in\mathbb{{R}}{}^{18\times6}$
are the state transition matrix and the noise matrix from time ${{t_{k-1}}}$
to time ${{t_{k}}}$.

According to the continuous form of the state vector relation (\ref{eq:rigid-body-model}, \ref{eq:IBVS}--\ref{eq:p^bar_img,k}), the partial derivative of the state vector ${{{\bf {x}}_{k}}}$ at time $t_{k}$ with respect to the state vector ${{{\bf {x}}_{k-1}}}$ at time $t_{k-1}$ is obtained as ${\bf {F}}_{k}$ and ${\bf {G}}_{k}$,

\[
\setlength{\arraycolsep}{2.5pt}\begin{aligned}{\bf {F}}_{k}\! & =\!\left[{\begin{array}{cccccc}
{{\bf {F}}_{{\bf {q}}_{k-1}}^{{\bf {q}}_{k}}} & {\bf {0}} & {\bf {0}} & {\bf {0}} & {{\bf {F}}_{{{\bf {b}}_{{\rm {gyr}},k-1}}}^{{\bf {q}}_{k}}} & {\bf {0}}\\
{\bf {0}} & {{\bf {I}}_{3\times3}} & {{\bf {F}}_{{{\bf {v}}}_{{\rm{r}},{k-1}}}^{{{\bf {p}}}_{{\rm{r}},{k}}}} & {\bf {0}} & {\bf {0}} & {\bf {0}}\\
{{\bf {F}}_{{\bf {q}}_{k-1}}^{{{\bf {v}}}_{{\rm{r}},{k}}}} & {\bf {0}} & {{\bf {I}}_{3\times3}} & {\bf {0}} & {\bf {0}} & {{\bf {F}}_{{{\bf {b}}_{{\rm {acc}},k-1}}}^{{{\bf {v}}}_{{\rm{r}},{k}}}}\\
{{\bf {F}}_{{\bf {q}}_{k-1}}^{^{{\rm {i}}}{\bf {\bar{p}}}_{k}}} & {\bf {0}} & {{\bf {F}}_{{{\bf {v}}}_{{\rm{r}},{k-1}}}^{^{{\rm {i}}}{\bf {\bar{p}}}_{k}}} & {{\bf {F}}_{^{{\rm {i}}}{\bf {\bar{p}}}_{k-1}}^{^{{\rm {i}}}{\bf {\bar{p}}}_{k}}} & {{\bf {F}}_{{{\bf {b}}_{{\rm {gyr}},k-1}}}^{^{{\rm {i}}}{\bf {\bar{p}}}_{k}}} & {\bf {0}}\\
{\bf {0}} & {\bf {0}} & {\bf {0}} & {\bf {0}} & {{\bf {I}}_{3\times3}} & {\bf {0}}\\
{\bf {0}} & {\bf {0}} & {\bf {0}} & {\bf {0}} & {\bf {0}} & {{\bf {I}}_{3\times3}}
\end{array}}\right],\\
{\bf {G}}_{k} & =\left[{\begin{array}{cc}
{{\bf {G}}_{^{{\rm {b}}}{\bf {\omega}}_{k-1}}^{{\bf {q}}_{k}}} & {\bf {0}}\\
{\bf {0}} & {\bf {0}}\\
{\bf {0}} & {{\bf {G}}_{^{{\rm {e}}}{\bf {a}}_{k-1}}^{{{\bf {v}}}_{{\rm{r}},{k}}}}\\
{{\bf {G}}_{^{{\rm {b}}}{\bf {\omega}}_{k-1}}^{^{{\rm {i}}}{\bf {\bar{p}}}_{k}}} & {\bf {0}}\\
{\bf {0}} & {\bf {0}}\\
{\bf {0}} & {\bf {0}}
\end{array}}\right].
\end{aligned}
\]

The relationships between orientation quaternion, position, velocity, image features, gyroscope bias, accelerometer bias, and state are derived in \textit{Appendix \ref{subsec:appendix-DKF-Propagation}}.
The state transfer and measurement equations introduced by image measurement are discussed below.

\subsection{Correction with Delayed Image Measurement}

\begin{figure}
	\centering{}\includegraphics{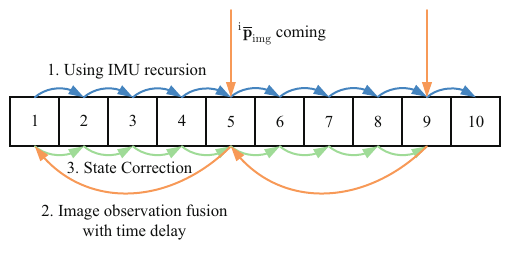}
	
	\caption{\label{fig:Data-transfer}Data flow diagram. Each block represents
	the data saved between the delayed image measurements.}
	
\end{figure}

Image processing provides the image coordinates of the target, and the state ${^{{\rm {i}}}{\bf {\bar{p}}}}$ is obtained after normalization. Since camera imaging and image processing take time, the measurement is obtained from a period ago. 
Suppose the delay time is ${{t_{D}}>0}$, then the image measurement at time $t$ is expressed as Eq. \eqref{eq:p^bar_img,k}. After discretization, the measurement equation is
\begin{equation}
{\bf {z}}_{k-D}={\bf {H}}{\bf {x}}_{k-D}+{\bf {v}}
\end{equation}
where ${{\bf {z}}={\left[{\begin{array}{cc}
{{\bar{p}}_{{x_{{\rm {i}}}}}} & {{\bar{p}}_{{y_{{\rm {i}}}}}}\end{array}}\right]^{{\rm {T}}}}\in{\mathbb{R}^{2}}}$, ${{\bf {H}}}$ is the measurement matrix; ${\bf {v}}$
is the system noise vector, in which the elements follow a Gaussian
distribution with zero mean values that are not correlated with each
other, ${\bf {v}}\sim{\cal N}\left({{{\bf {0}}_{2\times1}},{\bf {R}}}\right),{\bf {R}}\in\mathbb{{R}}{}^{2\times2}$.
Due to the measurement delay, ${\bf {z}}_{k-D}$ is obtained at time
$t_{k}$. The measurement matrix ${{\bf {H}}}$ is
\begin{equation}
{\bf {H}}=\left[{\begin{array}{cccccc}
{{\bf {0}}_{2\times4}} & {{\bf {0}}_{2\times3}} & {{\bf {0}}_{2\times3}} & {{\bf {I}}_{2\times2}} & {{\bf {0}}_{2\times3}} & {{\bf {0}}_{2\times3}}\end{array}}\right].\label{eq:Measurement equation}
\end{equation}

This essentially estimates the current state by utilizing a past prior state estimate and a past measurement. More explicitly, it is solved by (i) correcting the prior state at time $t_{k-D}$ using the current measurement ${{\bf {z}}_{k-D}}$, and (ii) propagating it to time $t_{k}$ by repeating the IMU prediction step. In step (ii), IMU prediction and propagation are presented in detail as follows. 

The delay processing is shown in Fig. \ref{fig:Data-transfer}, and the image measurement delay is spread over four IMU measurement cycles. The main point of discussion here is how the currently measured image features are used to correct the historical state estimate and to estimate the current state further.

From the measurement Eq. \eqref{eq:Measurement equation}, the
estimated state is
\begin{equation}
{\bf {\hat{x}}}_{k-D}\!=\!{\bf {\hat{x}}}_{{k-D|k-D-1}}\!+\!{\bf {K}}\left({{\bf {z}}_{k-D}\!-\!{\bf {H}}{\bf {\hat{x}}}_{{k-D|k-D-1}}}\right).\label{eq:x_hat_k-D}
\end{equation}
The covariance ${\bf {P}}_{k-D}$ of the state ${\bf {x}}_{k-D}$
is given by
\begin{equation}
{\bf {P}}_{k-D}=\left({{\bf {I}}-{\bf {K}}{\bf {H}}}\right){\bf {P}}_{k-D|k-D-1}\label{eq:P_k-D}
\end{equation}
where ${\bf {K}}$ is the filter gain, 
\begin{equation}
\begin{aligned}{\bf {K}} & ={\bf {P}}_{k-D|k-D-1}{{\bf {H}}^{{\rm {T}}}}{{\bf {S}}^{-1}},\\
{\bf {S}} & ={\bf {H}}{\bf {P}}_{k-D|k-D-1}{{\bf {H}}^{{\rm {T}}}}+{\bf {R}}.
\end{aligned}
\label{eq:KandS}
\end{equation}
After the above steps, the optimal state estimation at time $t_{k-D}$
is obtained, and then the state is propagated to the current time
$t_{k}$ according to the Eqs. \eqref{eq:x_hat}--\eqref{eq:P_k_k-1}.

\section{Hardware-in-the-Loop Simulations\label{sec:HITL-simulation}}

In this section, the proposed interception controller and DKF observer are implemented in the Hardware-in-the-Loop (HITL) simulations to verify the algorithm's effectiveness and perform statistical analysis with a large number of simulations.

\subsection{Introduction to Simulation Platform}

\begin{figure*}
	\centering{}\includegraphics[width=0.85\textwidth,height=0.65\textheight]{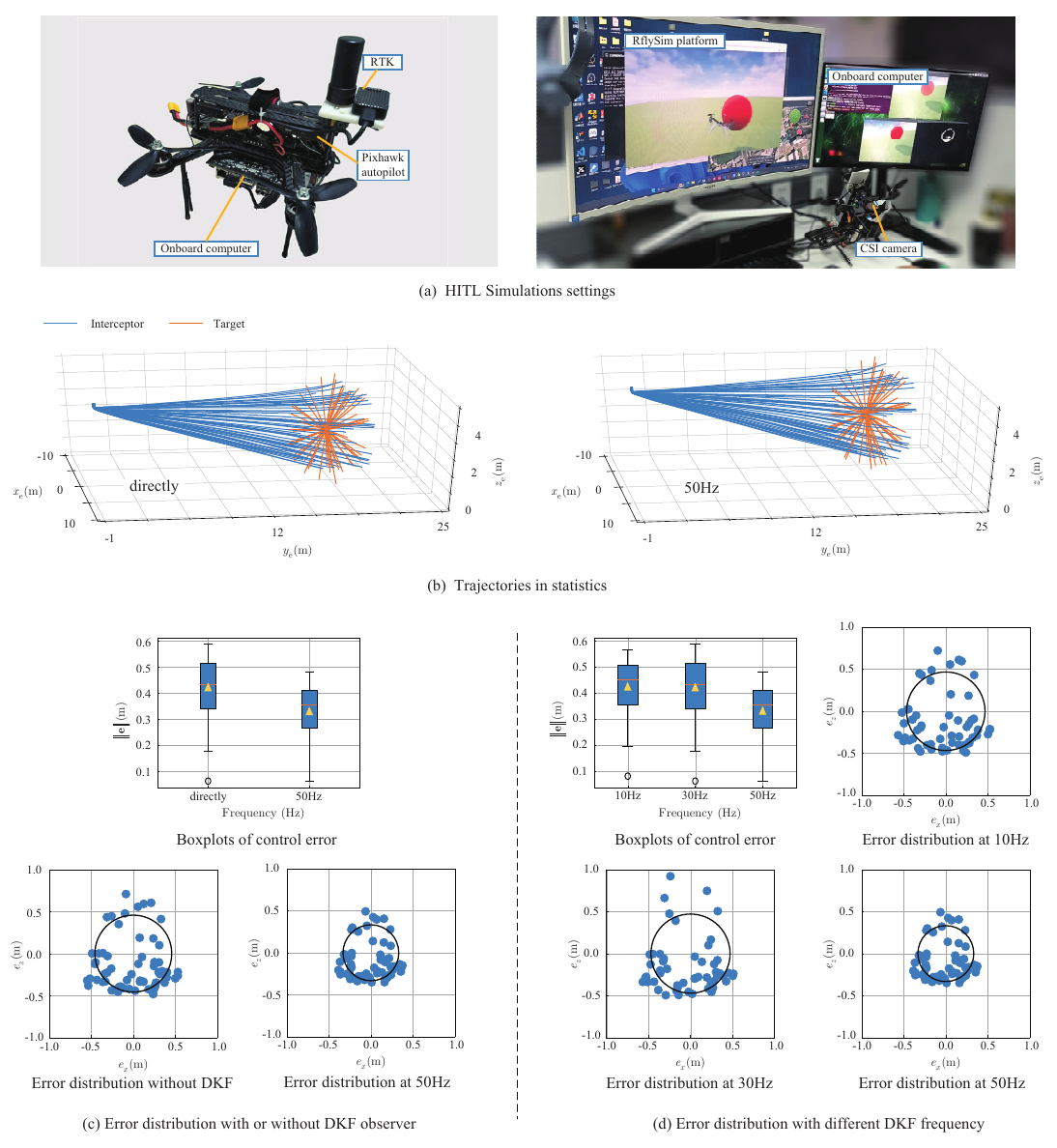}\caption{\label{fig:HITLInterception-statistic}{HITL simulation statistics.} 
	}
\vspace{-0.4cm}
\end{figure*}

\begin{figure*}
	\centering{}\includegraphics[width=0.85\textwidth,height=0.4\textheight]{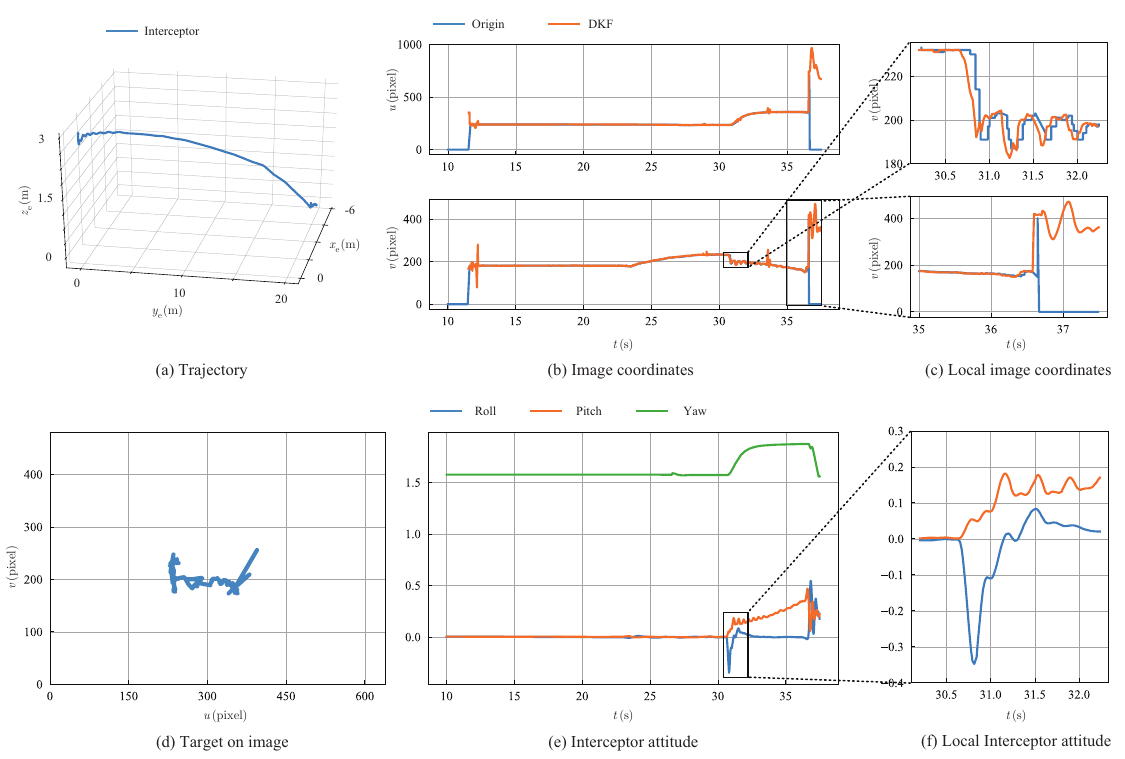}\caption{\label{fig:HITLInterception-detail}{HITL simulation process.} 
	}
\vspace{-0.4cm}
\end{figure*}

RflySim is a control and safety testing ecosystem for unmanned systems, utilizing Model-Based Design (MBD) to verify algorithms and system applicability through high-fidelity digital scenes and a UAV model library \cite{dai2021rflysim}.
Utilizing automated test software, RflySim facilitates extensive simulation verification based on metrics like Circular Error Probable (CEP), which is defined as the radius of a circle centered on a target, within which 50\% of projectiles are expected to land \cite{Shnidman1995CEP}.

RflySim is also adept at supporting HITL simulations. 
As shown in Fig. \ref{fig:HITLInterception-statistic}(a), HITL simulations use the same hardware and software architecture as the real flight experiments, including the onboard computer, the Pixhawk autopilot, and the high-speed interception algorithm.
The key difference between HITL simulations and real flights lies in the source of sensor data: in HITL, images and IMU data is sourced from the RflySim platform instead of the real world.

During HITL simulations, the onboard computer processes simulated image and IMU data from RflySim, executing state estimation and control signal calculations in real-time. These control signals are sent to the Pixhawk, which interacts with RflySim to simulate the interceptor's motion. RflySim then generates and sends new image and IMU data to the onboard computer, completing a closed-loop data and control system.

\subsection{Image Filtering Effect Experiments}

\subsubsection{Filtering Algorithm}

\begin{algorithm}[t]
	\caption{\label{alg:Filtering}${{\bf {\hat{x}}}}$ = DelayedFilter(${{{\bf {\omega}}_{{\rm {gyr}}}}}$,
	${{{\bf {a}}_{{\rm {acc}}}}}$, ${^{{\rm {i}}}{{\bf {\bar{p}}}_{{\rm {img}}}}}$)}
	
	\begin{algorithmic}[1] 
	\Require{IMU measurement, image measurement} 
	\Ensure{Estimated state} 
	\State{Initialize ${{\bf{\hat x}}_0}$, ${{\bf{P}}_0}$, ${{\bf{K}}_0}$}
	\For{$t=1:n$}
		\If{${}^{\rm{i}}{{\bf{\bar p}}_{{\rm{img}}}}$ is not available}
			\State{Follow \eqref{eq:x_hat} and \eqref{eq:P_k_k-1} to recursive state ${{\bf{\hat x}}_t}$ with IMU data}
		\Else
			\State{According to \eqref{eq:x_hat_k-D}, \eqref{eq:P_k-D}, \eqref{eq:KandS}, estimate the state ${{\bf{\hat x}}_{t - D}}$ at time ${t_{k - D}}$}
			\For{$i=D-1:0$}
				\State{Follow \eqref{eq:x_hat} and \eqref{eq:P_k_k-1} to recursive state ${{\bf{\hat x}}_t}$ with stored IMU data}
			\EndFor
		\EndIf
	\EndFor
	\end{algorithmic}
\end{algorithm}

\textit{Algorithm \ref{alg:Filtering}} outlines our state estimation method, and Fig. \ref{fig:Data-transfer} illustrates data transmission. High-frequency, real-time IMU data is used for state recursion when image measurements are pending. Once a delayed image measurement arrives, it corrects the state variable from $D$ cycles prior, updating it to the current moment for enhanced accuracy. C++ and Eigen\footnote{https://eigen.tuxfamily.org/} are employed for real-time computation efficiency.

\subsubsection{Analysis of Influence of Delay on Control Accuracy}

This section evaluates the DKF observer's impact on interception control accuracy. Simulations on RflySim, run 50 times each scenario  with the IMU and controller at \SI{200}{\hertz} and target perception at \SI{30}{\hertz}, compare interception accuracies without and with the DKF observer.
Fig. \ref{fig:HITLInterception-statistic}(b) shows 50 trial trajectories without and with the DKF observer, showing smoother interceptor movements and earlier aggressive maneuvers due to image prediction, enhancing interception success rates.

In the simulations, two experiment sets are conducted: (i) direct control using target detection and (ii) control using DKF-based target estimation.
The same random seed is set to perform 50 experiments in both experiment sets, and the control error box plot is shown in Fig. \ref{fig:HITLInterception-statistic}(c). 
The results and circular probability errors are shown in error distribution plots, specifically, ${\rm{CEP}}_{\rm{direct}}=0.408 \,\rm{m}$, ${\rm{CEP}}_{\rm{50Hz}}=0.332\,\rm{m}$.
It is shown that the DKF observer can effectively improve the control effect.

\subsubsection{Analysis of Influence of DKF Frequency}
This section investigates the effect of the IMU frequency on the accuracy of the interception control. In all simulations, the IMU frequency and inner-loop controller are set to \SI{200}{\hertz}. 
The DKF observer run frequency is set to \SI{10}{\hertz}, \SI{30}{\hertz}, and \SI{50}{\hertz}, respectively, and each set of simulations is run 50 times to obtain the CEP.

The same random seed is set in the three experiment sets, and the control error box plot is shown in Fig. \ref{fig:HITLInterception-statistic}(d). 
The results and circular probability errors are shown in error distribution plots, specifically, ${\rm{CEP}}_{\rm{10Hz}}=0.425\,\rm{m}$, ${\rm{CEP}}_{\rm{30Hz}}=0.423\,\rm{m}$, ${\rm{CEP}}_{\rm{50Hz}}=0.332$\,\rm{m}.
It is shown that the increased frequency of the DKF observer is beneficial to improve the control effect. Therefore, for practicality and computational efficiency, the DKF observer frequency is set at \SI{50}{\hertz} in real flight experiments.

\subsubsection{Details of DKF Observer}
In a simulation with the DKF observer set at \SI{50}{\hertz}, one of 50 trials is depicted in Fig. \ref{fig:HITLInterception-detail}. The interceptor multicopter initially ascends to \SI{3}{m} before starting its mission, adjusting its motion direction early in the interception and constantly refining its speed, direction, and attitude. The image coordinates plot and the local image coordinates plot illustrate the target's pixel coordinates and the filter-estimated coordinates from target acquisition to interception completion. Despite an \SI{80}{ms} image delay, the DKF observer effectively estimates the actual image coordinates, quickly converging early in the filtering process and aligning well with the measured values' overall trend.

It is shown that: (i) the filter estimated value changes before the observed value in the rising and falling segments, showing an obvious delay compensation effect; (ii) the filter's update frequency is higher than the image measurement update frequency; (iii) after the intercept was completed at \SI{36.6}{s}, the filter continued to estimate based on the last known target position.
These aspects enhance control quality, with the IMU compensating for image delays and the higher update frequency allowing for more current target estimations for control calculations. Additionally, if the target is lost mid-interception, the interceptor has a chance to reacquire it. Overall, the DKF observer successfully mitigates the impact of image delay on control, increasing confidence for its use in real flight experiments.

\subsection{Experiments comparing with previous IBVS method}
{To validate our proposed method, we compared it with the previously established IBVS method \cite{yang2020autonomous} by conducting 50 simulation experiments under the same initial conditions. The main difference in this study was the controller, while the DKF observer was consistently set at \SI{50}{\hertz} in both cases.}

\begin{figure*}
	\centering{}\includegraphics[width=0.85\textwidth,height=0.35\textheight]{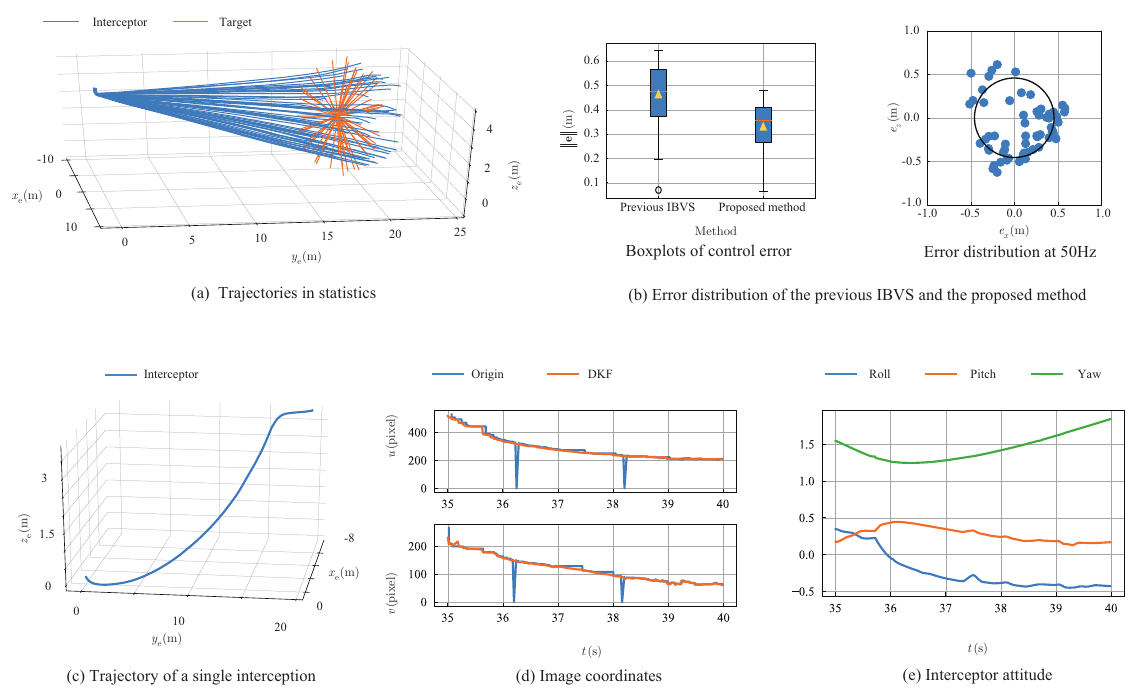}\caption{\label{fig:Compare-previous-IBVS}{Statistical results and single intercept details for comparison with previous IBVS methods.}
	}
\end{figure*}

{Fig. \ref{fig:Compare-previous-IBVS} presents the results of these comparative experiments. Fig. \ref{fig:Compare-previous-IBVS}(a) illustrates the aggregated trajectory data from all 50 experiments. A direct comparison between Fig. \ref{fig:Compare-previous-IBVS}(b) and Fig. \ref{fig:HITLInterception-statistic}(d) reveals a significant improvement in the interception accuracy and a denser clustering of interception points in our proposed method, supported by the CEP values of ${\rm{CEP}}_{\rm{previous}}=\SI{0.457}{m}$ versus ${\rm{CEP}}_{\rm{50Hz}}=\SI{0.332}{m}$.}

{Additional plots in Figs. \ref{fig:Compare-previous-IBVS}(c), (d), and (e) detail one experimental trajectory, image coordinates over time, and interceptor attitude, respectively. These plots underscore a key advantage of our method: the ability to induce more agile attitude adjustments. Such agility is crucial for enhancing the precision of interception, as demonstrated by the reduced CEP value. Overall, this comparative analysis underscores the enhanced interception accuracy and responsiveness of our approach compared to the previous IBVS method.}

\begin{figure*}
	\centering{}\includegraphics[width=0.85\textwidth,height=0.55\textheight]{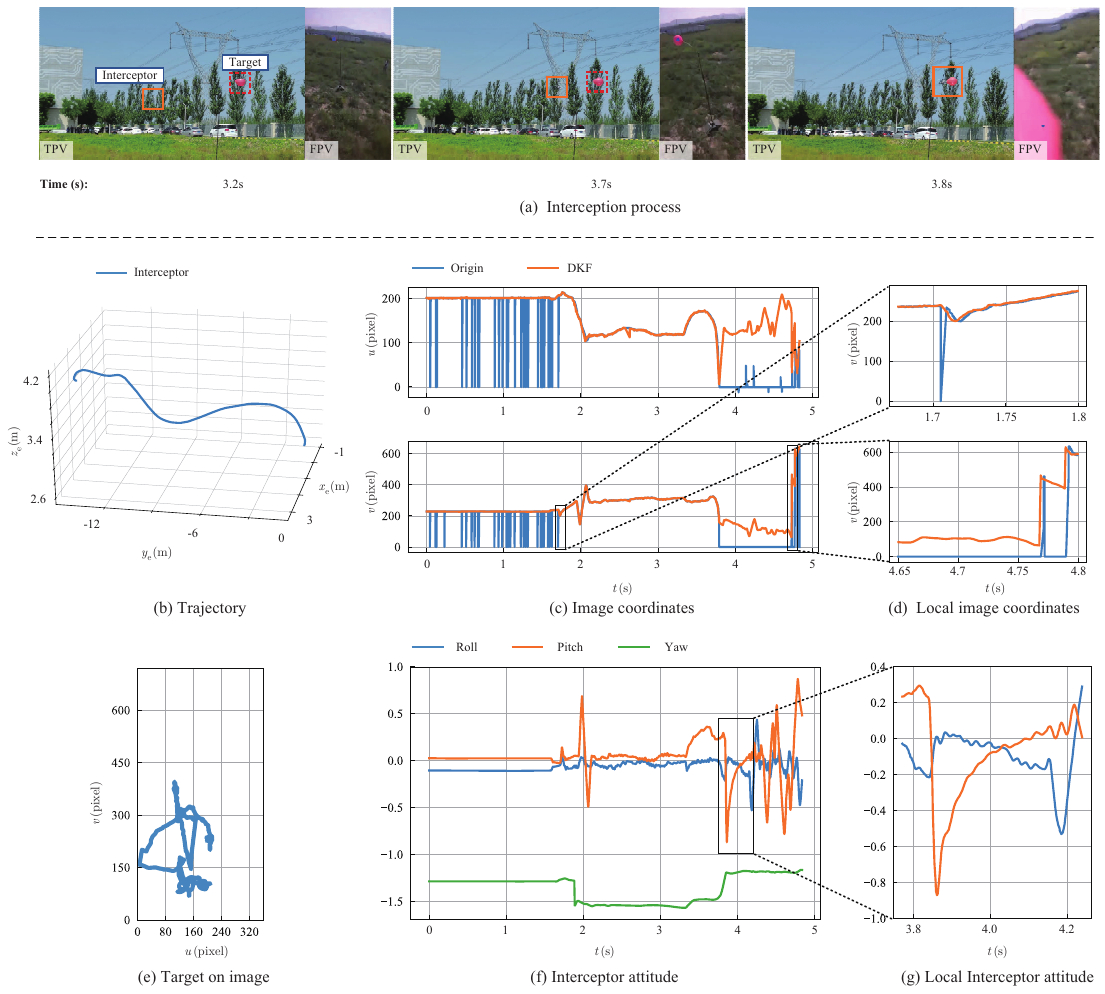}\caption{\label{fig:Interception-process}{Interception process of static target. 
	In figure (a), the left is the Third-Person View (TPV), and the right is the First-Person View (FPV) of the onboard camera.} 
	}
\end{figure*}

\section{High-speed Interception Flight Experiments\label{sec:flight-experiment}}

Extensive HITL simulations verify the effectiveness and robustness of our proposed method. Building on these simulations, real flight experiments were conducted, mirroring the steps of the simulation process to actualize the experiments.

\subsection{Static Target Flight Experiments}

\subsubsection{Design of Flight Experiments}

The real flight experiments used the same multicopter as the HITL simulations shown in Fig. \ref{fig:HITLInterception-statistic}(a), equipped with a T-Motor propulsion system, Pixhawk Nano V5 flight controller, NVIDIA Jetson Xavier NX onboard computer, and a front CSI camera with a \ang{120} FOV. The multicopter has a thrust-to-weight ratio of about 3, can accelerate up to \SI{5}{m/s^2}, and has a 20-minute flight time. The target, a red balloon, was placed \SI{3}{m} above ground level in open space, as illustrated in Fig. \ref{fig:Interception-process}.

The multicopter intercepted the balloon at varying speeds, \SIlist{5;10;15;19;21}{m/s}, as shown in Fig. \ref{fig:Interception-process}(a).
It can be observed that the multicopter kept approaching the target and kept the target in the FOV until the final interception task was completed. During the experiment, the interceptor's inclination reached \ang{50}, and the maximum speed was \SI{21}{m/s}. Experiment videos are available at \url{https://youtu.be/146Km6c30Ww}.

\subsubsection{Flight Data Analysis}

{Fig. \ref{fig:Interception-process}(b) shows the interceptor's trajectory, with GPS data used for position and velocity analysis.}
Note that the designed interception controller does not need to use GPS position data. 
{In the flight trajectory plot, the start position of the multicopter is about $(\SI{-1}{m},\SI{-1.8}{m},\SI{3}{m})$, and the target position is about $(\SI{-3}{m},\SI{-12}{m},\SI{4}{m})$.} The multicopter smoothly adjusted its attitude in the initial interception stages and continuously adapted its attitude and acceleration for accurate final interception.

Fig. \ref{fig:Interception-process}(c) displays the time-varying curve of estimated and measured target coordinates during flight. The DKF filter effectively estimates the target position even when the target is momentarily lost or leaves the FOV, as seen in the local image coordinates plot in Fig. \ref{fig:Interception-process}(d). The filter compensates for image measurement delay and maintains target estimation even with extended target loss.

{The movement of the target on the image is depicted in Fig. \ref{fig:Interception-process}(e), where the controller effectively maintains the target within the image's central area. The interceptor's attitude variation over time is illustrated in Fig. \ref{fig:Interception-process}(f)--(g), with the maximum pitch angle reaching \ang{50} during the interception process.}

\subsection{Moving Target Flight Experiments}

\begin{figure*}
	\centering{}\includegraphics[width=0.85\textwidth,height=0.55\textheight]{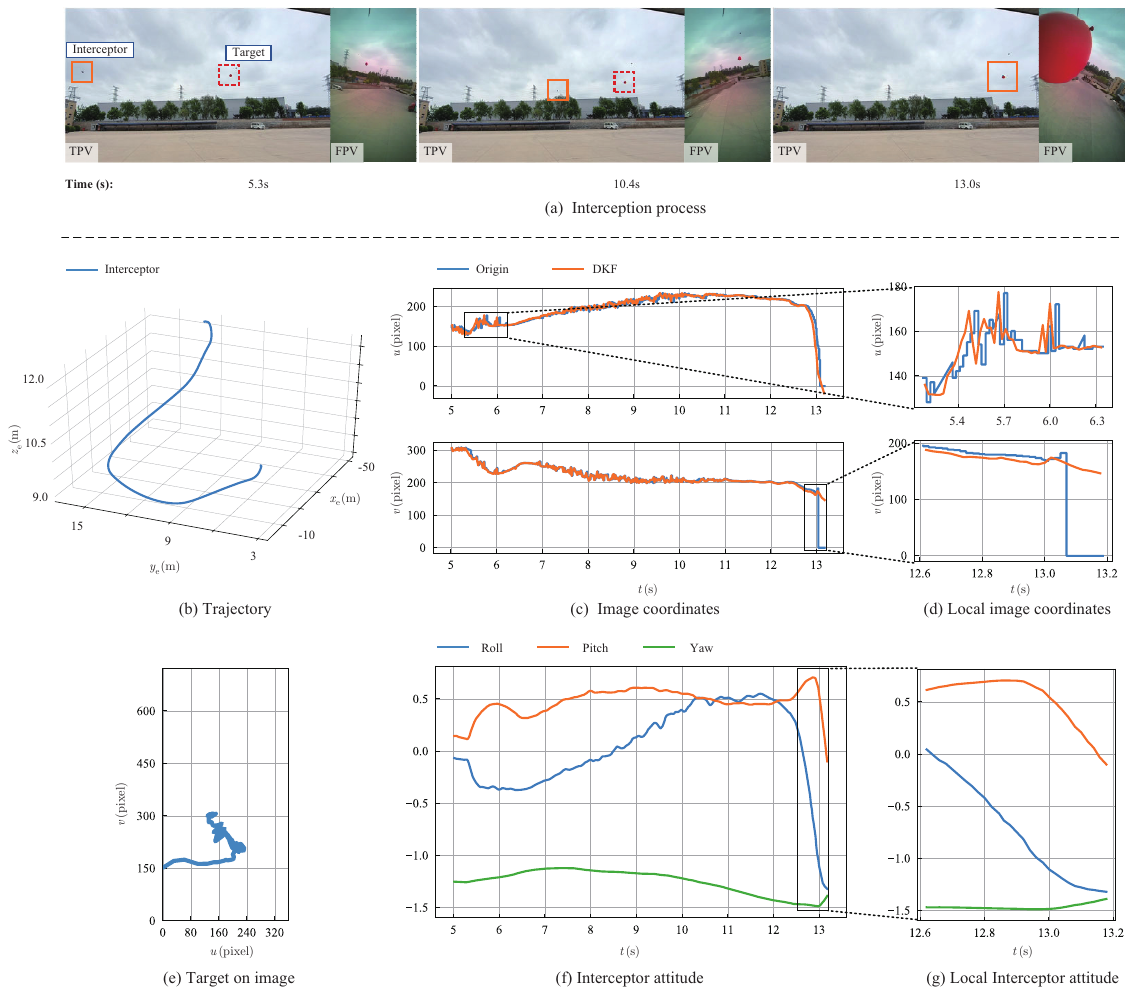}\caption{\label{fig:Interception-process-moving}{Interception process of moving target. }
	}
	\vspace{-0.4cm}
\end{figure*}

\subsubsection{Design of Flight Experiments}

In the experiments illustrated in Fig. \ref{fig:Interception-process-moving}, a target attached to a multicopter moves in a circular path at \SI{6}{m/s}, with its motion affected by wind. Fig. \ref{fig:Interception-process-moving}(a) shows the interceptor tracking and approaching the irregularly moving target, keeping it within its FOV until interception. The interceptor's trajectory, detailed in Fig. \ref{fig:Interception-process-moving}(b), displays effective target locking and agile movements, reaching speeds up to \SI{10.5}{m/s}. Additional experiments involving a freely floating helium balloon are included in the linked video.

\subsubsection{Flight Data Analysis}

The recorded data of a flight experiment using the filtered estimates are shown in Fig. \ref{fig:Interception-process-moving}(c)--(e).
The image coordinates plot reveals the DKF filter's effectiveness in smoothing image measurement anomalies and correcting delay. Despite the delay, the estimation remained accurate without impacting autonomous flight performance. The interceptor's attitude variations, depicted in Fig. \ref{fig:Interception-process-moving}(f)--(g), demonstrate its excellent maneuverability.

In summary, the DKF-based state observer compensates for delays in perception module imaging and processing. This integration allows the controller to access more accurate real-time image measurements, enhancing control performance despite lower imaging frame rates and potential target loss.

\subsection{Compared Flight Experiments}

\begin{figure}
	\centering{}\includegraphics{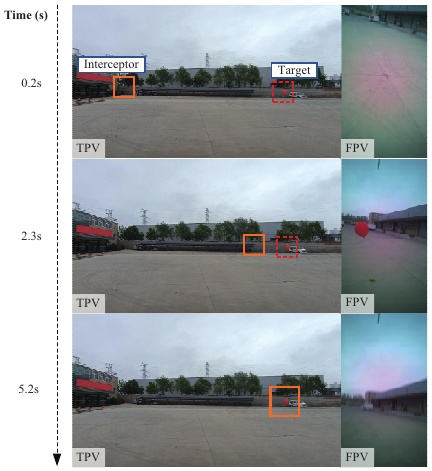}\caption{\label{fig:Compared}Interception process of PBVS. 
	}
\end{figure}

For comparison, we implemented simple experiments with a Position-based Visual Servo (PBVS) controller following the pipeline of \cite{stasinchuk2021multi}. As shown in Fig. \ref{fig:Compared}, the helium-filled balloon floats about \SI{2}{m} above the ground.
It can be observed that the interceptor misses the target due to two reasons: (i) the PBVS controller's inability to maintain target lock in the image, and (ii) wind disturbances leading to inaccurate and delayed 3-D target position estimation.

\section{Conclusion\label{sec:conclusion}}

This paper presents a high-speed autonomous interception method for UAV intrusion, effectively overcoming the limitations of existing radio frequency interference techniques. The main results include:
(i) Design of an IBVS interception controller, ensuring collinearity between the designed LOS vector, the velocity vector, and the target unit vector, with proven stability of the angular velocity controller.
(ii) Development of a DKF observer to address issues of delayed, low frame rate, and easily loss image feedback, enhancing feedback data quality.
(iii) Successful validation of the method through unique, high-speed autonomous flight experiments and HITL simulations, achieving speeds over \SI{20}{m/s}.
Future work will focus on refining the interception strategy using game theory, considering the strategies of intruding drones, and exploring collaborative interceptions involving multiple interceptors and multiple intruders.

\appendix

\subsection{Detailed Derivations and Proof of Theorem 1\label{subsec:appendix-proof}}
{
\subsubsection{Derivatives of Key Elements}
First, the derivatives of its elements are considered to analyze the properties of the Lyapunov candidate function.
By the property of the derivative of ${\mathsf{SO}}\left(3\right)$ \cite{murray2017mathematical}, ${{{\dot {\bf{n}}}_{{\rm {td}}}}={}^{{\rm {e}}}{\bf {\omega}}\times{\bf{n}}_{{\rm{td}}}}$, where $^{{\rm {e}}}{\bf {\omega}}$ is the angular velocity under EFCS. The  vector 2-norm is defined as $\left\Vert {\bf {p}}_{\rm{r}}\right\Vert = \sqrt{{\bf {p}}_{\rm{r}}^{{\rm {T}}}{\bf {p}}_{\rm{r}}} $, so the derivatives of ${{\bf {n}}_{{\rm {t}}}}$ and $z_1$ are
\begin{equation}
\begin{aligned}{{\dot {\bf{n}}}_{{\rm {t}}}} & =-\frac{{{\dot {\bf{p}}}_{\rm{r}}\left\Vert {\bf {p}}_{\rm{r}}\right\Vert -{\bf {p}}_{\rm{r}}{{\left({{{\bf {p}}_{\rm{r}}^{{\rm {T}}}}{\bf {p}}_{\rm{r}}}\right)}^{-\frac{1}{2}}}{{\bf {p}}_{\rm{r}}^{{\rm {T}}}}{\dot {\bf{p}}}_{\rm{r}}}}{{{\left\Vert {\bf {p}}_{\rm{r}}\right\Vert }^{2}}}\\
 & ={-\frac{{\dot {\bf{p}}}_{\rm{r}}}{{\left\Vert {\bf {p}}_{\rm{r}}\right\Vert }}+\frac{{{\bf {p}}_{\rm{r}}{{\bf {p}}_{\rm{r}}^{{\rm {T}}} {\dot {\bf{p}}}_{\rm{r}} }}}{{{\left\Vert {\bf {p}}_{\rm{r}}\right\Vert }^{3}}}}\\
 & =\left({-\frac{1}{{\left\Vert {\bf {p}}_{\rm{r}}\right\Vert }}{\bf {I}}+\frac{{{\bf {p}}_{\rm{r}}{{\bf {p}}_{\rm{r}}^{{\rm {T}}}}}}{{{\left\Vert {\bf {p}}_{\rm{r}}\right\Vert }^{3}}}}\right){\dot {\bf{p}}}_{\rm{r}}\\
 & =\left({-{\bf {I}}+{{\bf {n}}_{{\rm {t}}}}{\bf {n}}_{{\rm {t}}}^{{\rm {T}}}}\right)\frac{1}{{\left\Vert {\bf {p}}_{\rm{r}}\right\Vert }}{{\bf {v}}_{\rm{r}}},
\end{aligned}
\label{eq:no_dot}
\end{equation}
\begin{equation}
	\begin{aligned}{{\dot{z}}_{1}} & =-\left({{\bf {n}}_{{\rm {td}}}^{{\rm {T}}}{{\dot {\bf{n}}}_{{\rm {t}}}}+{\bf {n}}_{{\rm {t}}}^{{\rm {T}}}{{\dot {\bf{n}}}_{{\rm {td}}}}}\right)\\
	 & =-\frac{1}{{\left\Vert {\bf {p}}_{\rm{r}}\right\Vert }}{\bf {n}}_{{\rm {td}}}^{{\rm {T}}}\left({-{\bf {I}}+{{\bf {n}}_{{\rm {t}}}}{\bf {n}}_{{\rm {t}}}^{{\rm {T}}}}\right){{\bf {v}}_{\rm{r}}}-{\bf {n}}_{{\rm {t}}}^{{\rm {T}}}\left({^{{\rm {e}}}{\bf {\omega}}\times{\bf{n}}_{{\rm{td}}}}\right)\\
	 & =-\frac{1}{{\left\Vert {\bf {p}}_{\rm{r}}\right\Vert }}{\bf {n}}_{{\rm {td}}}^{{\rm {T}}}\left({-{\bf {I}}+{{\bf {n}}_{{\rm {t}}}}{\bf {n}}_{{\rm {t}}}^{{\rm {T}}}}\right){{\bf {v}}_{\rm{r}}}-{\left( {{{\bf{n}}_{{\rm{td}}}} \!\times\! {{\bf{n}}_{\rm{t}}}} \right)^{\rm{T}}}{}^{\rm{e}}\omega.
	\end{aligned}
	\label{eq:z1_dot_appendix}
\end{equation}}

{\subsubsection{Derivative of $L_1$}
The derivative of Eq. \eqref{eq:L1} along Eq. \eqref{eq:z1_dot_appendix} is
\begin{equation}
	\begin{small}
		\begin{aligned}{{\dot{L}}_{1}} & =\frac{z_1 \dot{z}_1}{{k_{\rm{b}}^2 - z_1^2}}\\
			& =- \frac{{{z_1}}}{{k_{\rm{b}}^2 \!-\! z_1^2}}\frac{1}{{\left\| {{{\bf{p}}_{\rm{r}}}} \right\|}}{\bf{n}}_{{\rm{td}}}^{\rm{T}}\left( { - {\bf{I}} \!+\! {{\bf{n}}_{\rm{t}}}{\bf{n}}_{\rm{t}}^{\rm{T}}} \right){{\bf{v}}_{\rm{r}}} \!-\! \frac{{{z_1}}}{{k_{\rm{b}}^2 - z_1^2}}{\left( {{{\bf{n}}_{{\rm{td}}}} \!\times\! {{\bf{n}}_{\rm{t}}}} \right)^{\rm{T}}}{}^{\rm{e}}\omega.
		   \end{aligned}
	\end{small}
	\label{eq:L1_dot_appendix}
\end{equation}
Due to ${\bf {v}}_{\rm{r}} = {\bf {v}}_{\rm{rd}} + {\bf {z}}_2 = -k_1 {\bf {p}}_{\rm{r}} + {\bf {z}}_2 = k_1 \left\lVert {\bf {p}}_{\rm{r}}\right\rVert {{\bf {n}}_{{\rm {t}}}} + {\bf {z}}_2$, Eq. \eqref{eq:L1_dot_appendix} is rewritten as
\begin{equation}
	\begin{small}
		\begin{aligned}{{\dot{L}}_{1}} 
		 & =\frac{{{z_1}}}{{k_{\rm{b}}^2 \!-\! z_1^2}} \left({\bf {n}}_{{\rm {td}}}^{{\rm {T}}}\left({-{\bf {I}}\!+\!{{\bf {n}}_{{\rm {t}}}}{\bf {n}}_{{\rm {t}}}^{{\rm {T}}}}\right){{\bf {n}}_{{\rm {t}}}} \!-\! \frac{1}{{\left\| {{{\bf{p}}_{\rm{r}}}} \right\|}}{\bf{n}}_{{\rm{td}}}^{\rm{T}}\left( { - {\bf{I}} \!+\! {{\bf{n}}_{\rm{t}}}{\bf{n}}_{\rm{t}}^{\rm{T}}} \right){{\bf{z}}_2}\right)  \\
		 &\quad - \frac{{{z_1}}}{{k_{\rm{b}}^2 \!-\! z_1^2}}{\left( {{{\bf{n}}_{{\rm{td}}}} \!\times\! {{\bf{n}}_{\rm{t}}}} \right)^{\rm{T}}}{}^{\rm{e}}\omega \\
		 & =\frac{{{z_1}}}{{k_{\rm{b}}^2 \!-\! z_1^2}} \left({\bf {n}}_{{\rm {td}}}^{{\rm {T}}}\left(-{{\bf {n}}_{{\rm {t}}}}+{{\bf {n}}_{{\rm {t}}}}{\bf {n}}_{{\rm {t}}}^{{\rm {T}}}{{\bf {n}}_{{\rm {t}}}} \right) \!-\! \frac{1}{{\left\| {{{\bf{p}}_{\rm{r}}}} \right\|}}{\bf{n}}_{{\rm{td}}}^{\rm{T}}\left( { - {\bf{I}} \!+\! {{\bf{n}}_{\rm{t}}}{\bf{n}}_{\rm{t}}^{\rm{T}}} \right){{\bf{z}}_2}\right)   \\
		 &\quad - \frac{{{z_1}}}{{k_{\rm{b}}^2 \!-\! z_1^2}}{\left( {{{\bf{n}}_{{\rm{td}}}} \!\times\! {{\bf{n}}_{\rm{t}}}} \right)^{\rm{T}}}{}^{\rm{e}}\omega \\
		 & = -\frac{{{z_1}}}{{k_{\rm{b}}^2 \!-\! z_1^2}} \frac{1}{{\left\| {{{\bf{p}}_{\rm{r}}}} \right\|}}{\bf{n}}_{{\rm{td}}}^{\rm{T}}\left( { - {\bf{I}} \!+\! {{\bf{n}}_{\rm{t}}}{\bf{n}}_{\rm{t}}^{\rm{T}}} \right){{\bf{z}}_2} \!-\! \frac{{{z_1}}}{{k_{\rm{b}}^2 \!-\! z_1^2}}{\left( {{{\bf{n}}_{{\rm{td}}}} \!\times\! {{\bf{n}}_{\rm{t}}}} \right)^{\rm{T}}}{}^{\rm{e}}\omega.
		\end{aligned}
	\end{small}
	\label{eq:L1_dot_appendix_rewritten}
\end{equation}}

{\subsubsection{Derivative of $L_3$}
Combining Eqs. \eqref{eq:L2_dot_rewritten}, \eqref{eq:L1_dot_appendix_rewritten}, the derivative of Eq. \eqref{eq:L3} is
\begin{equation}
\begin{aligned}
	{{\dot{L}}_{3}} &= -\frac{{{z_1}}}{{k_{\rm{b}}^2 - z_1^2}}{\left( {{{\bf{n}}_{{\rm{td}}}} \!\times\! {{\bf{n}}_{\rm{t}}}} \right)^{\rm{T}}}{}^{\rm{e}}\omega - k_1 {\bf {p}}_{\rm{r}}^{\rm{T}} {\bf {p}}_{\rm{r}} + {\bf {z}}_{2}^{\rm{T}} \left({\bf {a}}_{\rm{r}} + k_1 {\bf {v}}_{\rm{r}} \right) \\
	&= -\frac{{{z_1}}}{{k_{\rm{b}}^2 - z_1^2}}{\left( {{{\bf{n}}_{{\rm{td}}}} \!\times\! {{\bf{n}}_{\rm{t}}}} \right)^{\rm{T}}}{}^{\rm{e}}\omega - k_1 {\bf {p}}_{\rm{r}}^{\rm{T}} {\bf {p}}_{\rm{r}} - k_2 {\bf {z}}_{2}^{\rm{T}} {\bf {z}}_{2}. 
\end{aligned}
\label{eq:L3_dot_appendix}
\end{equation}}

{\subsubsection{Proof of Theorem 1}
A total Lyapunov candidate function is designed as:
\begin{equation}
	\begin{aligned}
		L &= L_3 + L_4 \\
		  &= 1 \!-\! {\bf{n}}_{{\rm{td}}}^{{\rm {T}}}{{\bf {n}}_{{\rm {t}}}}  +  \frac{1}{2} {\bf {p}}_{\rm{r}}^{\rm{T}} {\bf {p}}_{\rm{r}}  +  \frac{1}{2} {\bf {z}}_{2}^{\rm{T}} {\bf {z}}_{2}   {\mathop{{\rm tr}}\nolimits}\left({{\bf {I}}-{\bf {R}}_{{\rm {d}}}^{{\rm {T}}}{\bf {R}}_{{\rm {b}}}^{{\rm {e}}}}\right).
	\end{aligned}
\label{eq:L_appendix}
\end{equation}
Combining Eqs. \eqref{eq:L4_dot}, \eqref{eq:L3_dot_appendix}, the derivative of Eq. \eqref{eq:L_appendix} is
\begin{equation}
	\begin{small}
	\begin{aligned}
		{\dot{L}} 
		&= - k_1 {\bf {p}}_{\rm{r}}^{\rm{T}} {\bf {p}}_{\rm{r}} - k_2 {\bf {z}}_{2}^{\rm{T}} {\bf {z}}_{2} \\ &\quad -\frac{{{z_1}}}{{k_{\rm{b}}^2 - z_1^2}}{\left( {{{\bf{n}}_{{\rm{td}}}} \!\times\! {{\bf{n}}_{\rm{t}}}} \right)^{\rm{T}}}{}^{\rm{e}}\omega +  {\mathop{{\rm vex}}\nolimits}\left({{\bf {R}}_{{\rm {d}}}^{{\rm {T}}}{\bf {R}}_{{\rm {b}}}^{{\rm {e}}}-{\bf {R}}{{_{{\rm {b}}}^{{\rm {e}}}}^{{\rm {T}}}}{{\bf {R}}_{{\rm {d}}}}}\right)^{\rm{T}} {}^{{\rm {b}}}{\bf {\omega}} \\
		&= - k_1 {\bf {p}}_{\rm{r}}^{\rm{T}} {\bf {p}}_{\rm{r}} - k_2 {\bf {z}}_{2}^{\rm{T}} {\bf {z}}_{2} \\ &\quad - \left( \frac{{{z_1}}}{{k_{\rm{b}}^2 \!-\! z_1^2}} \left({{\bf{n}}_{{\rm{td}}} \!\times\! {{\bf {n}}_{{\rm {t}}}}}\right)^{\rm{T}} {\bf {R}}_{{\rm {b}}}^{{\rm {e}}} \!-\!  {\mathop{{\rm vex}}\nolimits}\left({{\bf {R}}_{{\rm {d}}}^{{\rm {T}}}{\bf {R}}_{{\rm {b}}}^{{\rm {e}}}\!-\! {\bf {R}}{{_{{\rm {b}}}^{{\rm {e}}}}^{{\rm {T}}}}{{\bf {R}}_{{\rm {d}}}}}\right)^{\rm{T}} \right) {}^{{\rm {b}}}{\bf {\omega}} \\
		&= - k_1 {\bf {p}}_{\rm{r}}^{\rm{T}} {\bf {p}}_{\rm{r}} - k_2 {\bf {z}}_{2}^{\rm{T}} {\bf {z}}_{2}  -   {\bf {w}}^{\rm{T}} {\rm{sat}}\left({\bf {w}},\omega_{{\rm {m}}}\right)  \\
		&= - k_1 {\bf {p}}_{\rm{r}}^{\rm{T}} {\bf {p}}_{\rm{r}} - k_2 {\bf {z}}_{2}^{\rm{T}} {\bf {z}}_{2}  -  {\kappa_{\omega_{{\rm {m}}}}}\left({\bf {{\bf {w}}}}\right) {\bf {w}}^{\rm{T}} {\bf {w}}  \\
		&\le 0
	\end{aligned}
	\end{small}
	\label{eq:L_dot_appendix}
\end{equation}
where ${\bf {w}} \!=\! \frac{{{z_1}}}{{k_{\rm{b}}^2 \!-\! z_1^2}} {{\bf {R}}_{{\rm {b}}}^{{\rm {e}}}}^{\rm{T}} \left({{\bf{n}}_{{\rm{td}}} \!\times\! {{\bf {n}}_{{\rm {t}}}}}\right) \!-\!  {\mathop{{\rm vex}}\nolimits}\left({{\bf {R}}_{{\rm {d}}}^{{\rm {T}}}{\bf {R}}_{{\rm {b}}}^{{\rm {e}}}\!-\! {\bf {R}}{{_{{\rm {b}}}^{{\rm {e}}}}^{{\rm {T}}}}{{\bf {R}}_{{\rm {d}}}}}\right) $, and ${}^{{\rm {b}}}{\bf {\omega}} = {\rm{sat}}\left({\bf {w}},\omega_{{\rm {m}}}\right)$.}

{According to \textit{Lemma 1} in \cite{TEE2009918}, we have 
\[
{\bf{p}}_{\rm{r}} \to {\bf{0}}, \quad {\bf {n}}_{{\rm {t}}} \in \mathcal{S} = \left\{ {\bf {x}} \mid \left\lvert {{\bf{n}}_{{\rm{td}}}^{{\rm {T}}} {\bf {x}}} \right\rvert > 1 - k_{\rm{b}} \right\},
\]
provided that the initial state satisfies ${\bf {n}}_{{\rm {t}}}\left(0\right)  \in \mathcal{S}$.
This ensures that the interceptor's relative position to the target converges to zero. Additionally, throughout the interception process, the target vector remains close to the designed LOS vector, ensuring the target remains locked within the interceptor's FOV.}

\subsection{DKF Propagation with IMU Measurements\label{subsec:appendix-DKF-Propagation}}

\subsubsection{Quaternion Status Transfer}

The discrete-time recurrence relation of state ${{\bf {q}}}$ is
\begin{equation}
{\bf {q}}_{k}={\bf {q}}_{k-1}\otimes\Delta{\bf {q}}_{k-1}={\bf {M}}\left({\Delta{\bf {q}}_{k-1}}\right){\bf {q}}_{k-1}
\end{equation}
where ${\bf {M}}\left({\Delta{\bf {q}}_{k-1}}\right)$ is given in
\textit{Appendix \ref{subsec:appendix-Matrices}}, ${^{{\rm {b}}}{\bf {\omega}}}$
is the estimated value at time $t_{k}$, and the relationship with
the measured value of the gyroscope is Eq. \eqref{eq:gyr-measurement}.

So, ${\bf {F}}_{{\bf {q}}_{k-1}}^{{\bf {q}}_{k}}={\bf {M}}\left({\Delta{\bf {q}}_{k-1}}\right)$.
Because the estimated value and the bias of gyroscope have
the same order and coefficient, their partial derivatives have the
same form:
\begin{equation}
\begin{aligned}{\bf {F}}_{{{\bf {b}}_{{\rm {gyr}},k-1}}}^{{\bf {q}}_{k}} & =\left[{\begin{array}{ccc}
{\frac{{q_{1}}}{2}} & {\frac{{q_{2}}}{2}} & {\frac{{q_{3}}}{2}}\\
{-\frac{{q_{0}}}{2}} & {\frac{{q_{3}}}{2}} & {-\frac{{q_{2}}}{2}}\\
{-\frac{{q_{3}}}{2}} & {-\frac{{q_{0}}}{2}} & {\frac{{q_{1}}}{2}}\\
{\frac{{q_{2}}}{2}} & {-\frac{{q_{1}}}{2}} & {-\frac{{q_{0}}}{2}}
\end{array}}\right]\Delta t,\\
{\bf {G}}_{^{{\rm {b}}}{\bf {\omega}}_{k-1}}^{{\bf {q}}_{k}} & ={\bf {F}}_{{{\bf {b}}_{{\rm {gyr}},k-1}}}^{{\bf {q}}_{k}}.
\end{aligned}
\end{equation}

\subsubsection{Position and Velocity Status Transfer}

For low-cost integrated navigation systems embedded in small multicopters, relative positions are generally used for navigation, while the influence of coning and sculling errors is ignored \cite{roscoe2001equivalency}.
The position transfer equation is written as
\begin{equation}
	{{\bf {p}}}_{{\rm{r}},{k}}={{\bf {p}}}_{{\rm{r}},{k-1}}+\frac{1}{2}\left({{{\bf {v}}}_{{\rm{r}},{k-1}}+{{\bf {v}}}_{{\rm{r}},{k}}}\right)\Delta t.
\end{equation}

The corresponding state transition matrix is
\begin{equation}
{\bf {F}}_{{{\bf {v}}}_{{\rm{r}},{k-1}}}^{{{\bf {p}}}_{{\rm{r}},{k}}}={\bf {I}}\,\Delta t.
\end{equation}
Ignoring the angular velocity of the earth\textquoteright s rotation,
the velocity Eq. \eqref{eq:ve-and-higher-order-state} is discretized
into a transfer equation, and we can get
\begin{equation}
	{{\bf {v}}}_{{\rm{r}},{k}}={{\bf {v}}}_{{\rm{r}},{k-1}}+\left({\bf {R}}_{{\rm {b}}}^{{\rm {e}}}\cdot{}^{{\rm {b}}}{\bf {a}}_{k-1}+g{{\bf {e}}_{3}}\right) \Delta t
\end{equation}
where $\Delta t$ is the period of the discrete system. Because there
is a coordinate system transformation between the acceleration measurement
in BCS and the velocity state in EFCS, there is a transfer relationship between the acceleration
and the orientation quaternion, that is
\begin{equation}
{\bf {F}}_{{\bf {q}}_{k-1}}^{{{\bf {v}}}_{{\rm{r}},{k}}}=2{\left[{\begin{array}{ccc}
{{{\bf {M}}_{1}}{}^{{\rm {b}}}{\bf {a}}} & {{{\bf {M}}_{2}}{}^{{\rm {b}}}{\bf {a}}} & {{{\bf {M}}_{3}}{}^{{\rm {b}}}{\bf {a}}}\end{array}}\right]^{{\rm {T}}}}\Delta t
\end{equation}
where $^{{\rm {b}}}{\bf {a}}=\left[{\begin{array}{ccc}
{a_{{x_{{\rm {b}}}}}} & {a_{{y_{{\rm {b}}}}}} & {a_{{z_{{\rm {b}}}}}}\end{array}}\right]^{{\rm {T}}}={{\bf {a}}_{{\rm {acc}}}}-{{\bf {b}}_{{\rm {acc}}}}$, ${{{\bf {M}}_{1}}}$, ${{{\bf {M}}_{2}}}$,
${{{\bf {M}}_{3}}}$ is given in \textit{Appendix \ref{subsec:appendix-Matrices}}.

Due to the estimated value and the bias of accelerometer have
the same order and coefficient, their partial derivatives have the
same form:
\begin{equation}
\begin{aligned}{\bf {F}}_{{{\bf {b}}_{{\rm {acc,}}k-1}}}^{{{\bf {v}}}_{{\rm{r}},{k}}} & =-{\bf {R}}_{{\rm {b}}}^{{\rm {e}}}\Delta t,\\
{\bf {G}}_{^{{\rm {e}}}{\bf {a}}_{k-1}}^{{{\bf {v}}}_{{\rm{r}},{k}}} & ={\bf {F}}_{{{\bf {b}}_{{\rm {acc,}}k-1}}}^{{{\bf {v}}}_{{\rm{r}},{k}}}.
\end{aligned}
\end{equation}

\subsubsection{Image Measurement State Transfer}

The IBVS Eq. \eqref{eq:IBVS} describes the relationship between the change of image feature points and the twist in CCS. 
Converting the twist in the equation from CCS to BCS and discretizing it, we get the transfer equation of image feature as
\begin{equation}
\setlength{\arraycolsep}{3pt}\begin{aligned}^{{\rm {i}}}{\bf {\bar{p}}}_{k}= & ^{{\rm {i}}}{\bf {\bar{p}}}_{k-1}+\left[{\begin{array}{ccc}
{-\frac{1}{{p_{{z_{{\rm {c}}}}}}}} & 0 & {\frac{{{\bar{p}}_{{x_{{\rm {i}}}}}}}{{p_{{z_{{\rm {c}}}}}}}}\\
0 & {-\frac{1}{{p_{{z_{{\rm {c}}}}}}}} & {\frac{{{\bar{p}}_{{y_{{\rm {i}}}}}}}{{p_{{z_{{\rm {c}}}}}}}}
\end{array}}\right]{\bf {R}}{_{{\rm {c}}}^{{\rm {b}}}}^{{\rm {T}}}{\bf {R}}{_{{\rm {b}}}^{{\rm {e}}}}^{{\rm {T}}}{{\bf {v}}}_{{\rm{r}},{k-1}}\Delta t\\
 & +\left[{\begin{array}{ccc}
{{{\bar{p}}_{{x_{{\rm {i}}}}}}{{\bar{p}}_{{y_{{\rm {i}}}}}}} & {-\left({1+\bar{p}_{{x_{{\rm {i}}}}}^{2}}\right)} & {{\bar{p}}_{{y_{{\rm {i}}}}}}\\
{1+\bar{p}_{{y_{{\rm {i}}}}}^{2}} & {-{{\bar{p}}_{{x_{{\rm {i}}}}}}{{\bar{p}}_{{y_{{\rm {i}}}}}}} & {-{{\bar{p}}_{{x_{{\rm {i}}}}}}}
\end{array}}\right]{\bf {R}}{_{{\rm {c}}}^{{\rm {b}}}}^{{\rm {T}}}{}^{{\rm {b}}}{\bf {\omega}}_{k-1}\Delta t.
\end{aligned}
\end{equation}

The state transition matrix of the image feature $^{{\rm {i}}}{\bf {\bar{p}}}_{k}$
to the orientation quaternion ${\bf {q}}_{k-1}$ is
\begin{equation}
{\bf {F}}_{{\bf {q}}_{k-1}}^{^{{\rm {i}}}{\bf {\bar{p}}}_{k}}=\frac{1}{{p_{{z_{{\rm {c}}}}}}}{\left[{\begin{array}{cc}
{{{\bf {M}}_{4}}{{\bf {v}}}_{{\rm{r}}}} & {{{\bf {M}}_{5}}{{\bf {v}}}_{{\rm{r}}}}\end{array}}\right]^{{\rm {T}}}}\Delta t\label{eq:F-p-q}
\end{equation}
where ${{{\bf {M}}_{4}}}$, ${{{\bf {M}}_{5}}}$
is given in \textit{Appendix \ref{subsec:appendix-Matrices}}.

The state transition matrix of the image feature $^{{\rm {i}}}{\bf {\bar{p}}}_{k}$
to the velocity ${{\bf {v}}}_{{\rm{r}},{k-1}}$ is
\begin{equation}
{\bf {F}}_{{{\bf {v}}}_{{\rm{r}},{k-1}}}^{^{{\rm {i}}}{\bf {\bar{p}}}_{k}}=\left[{\begin{array}{ccc}
{-\frac{1}{{p_{{z_{{\rm {c}}}}}}}} & 0 & {\frac{{{\bar{p}}_{{x_{{\rm {i}}}}}}}{{p_{{z_{{\rm {c}}}}}}}}\\
0 & {-\frac{1}{{p_{{z_{{\rm {c}}}}}}}} & {\frac{{{\bar{p}}_{{y_{{\rm {i}}}}}}}{{p_{{z_{{\rm {c}}}}}}}}
\end{array}}\right]{\bf {R}}{_{{\rm {c}}}^{{\rm {b}}}}^{{\rm {T}}}{\bf {R}}{_{{\rm {b}}}^{{\rm {e}}}}^{{\rm {T}}}\Delta t.\label{eq:F-p-v}
\end{equation}

The state transition matrix of the image feature $^{{\rm {i}}}{\bf {\bar{p}}}_{k}$
of the $k$-th frame to the image feature ${^{{\rm {i}}}{\bf {\bar{p}}}_{k-1}}$
of the $\left(k-1\right)$-th frame is
\begin{equation}\label{eq:F-p-p}
\begin{small}
\setlength{\arraycolsep}{0pt}{\bf {F}}_{^{{\rm {i}}}{\bf {\bar{p}}}_{k-1}}^{^{{\rm {i}}}{\bf {\bar{p}}}_{k}}\!={\bf {I}}+\left[\!{\begin{array}{cc}
{\frac{{v_{{z_{{\rm {c}}}}}}}{{p_{{z_{{\rm {c}}}}}}}\!+\!{{\bar{p}}_{{y_{{\rm {i}}}}}}{\omega_{{x_{{\rm {c}}}}}}\!-\!2{{\bar{p}}_{{x_{{\rm {i}}}}}}{\omega_{{y_{{\rm {c}}}}}}} & {{{\bar{p}}_{{x_{{\rm {i}}}}}}{\omega_{{x_{{\rm {c}}}}}}\!+\!{\omega_{{z_{{\rm {c}}}}}}}\\
{-\!{{\bar{p}}_{{y_{{\rm {i}}}}}}{\omega_{{y_{{\rm {c}}}}}}\!-\!{\omega_{{z_{{\rm {c}}}}}}} & {\frac{{v_{{z_{{\rm {c}}}}}}}{{p_{{z_{{\rm {c}}}}}}}\!+\!2{{\bar{p}}_{{y_{{\rm {i}}}}}}{\omega_{{x_{{\rm {c}}}}}}\!-\!{{\bar{p}}_{{x_{{\rm {i}}}}}}{\omega_{{y_{{\rm {c}}}}}}}
\end{array}}\!\right]\!\Delta t.
\end{small}
\end{equation}

The state transition matrix and the system noise driving matrix of
the image feature $^{{\rm {i}}}{\bf {\bar{p}}}_{k}$ to the gyroscope
bias ${{{\bf {b}}_{{\rm {gyr}},k-1}}}$ are
\begin{equation}
\begin{aligned}{\bf {F}}_{{{\bf {b}}_{{\rm {gyr}},k-1}}}^{^{{\rm {i}}}{\bf {\bar{p}}}_{k}} & =-\left[{\begin{array}{ccc}
{{{\bar{p}}_{{x_{{\rm {i}}}}}}{{\bar{p}}_{{y_{{\rm {i}}}}}}} & {-\left({1+\bar{p}_{{x_{{\rm {i}}}}}^{2}}\right)} & {{\bar{p}}_{{y_{{\rm {i}}}}}}\\
{1+\bar{p}_{{y_{{\rm {i}}}}}^{2}} & {-{{\bar{p}}_{{x_{{\rm {i}}}}}}{{\bar{p}}_{{y_{{\rm {i}}}}}}} & {-{{\bar{p}}_{{x_{{\rm {i}}}}}}}
\end{array}}\right]{\bf {R}}{_{{\rm {c}}}^{{\rm {b}}}}^{{\rm {T}}},\\
{\bf {G}}_{^{{\rm {b}}}{\bf {\omega}}_{k-1}}^{^{{\rm {i}}}{\bf {\bar{p}}}_{k}} & ={\bf {F}}_{{{\bf {b}}_{{\rm {gyr}},k-1}}}^{^{{\rm {i}}}{\bf {\bar{p}}}_{k}}.
\end{aligned}
\label{eq:F-p-b}
\end{equation}

The Eqs. \eqref{eq:F-p-q}--\eqref{eq:F-p-b} complete the state transition
matrix ${{\bf {F}}_{k}}$ and the system noise driving
matrix ${{\bf {G}}_{k}}$, and the state propagation can
be realized according to the Eqs. \eqref{eq:x_hat}--\eqref{eq:P_k_k-1}.

\subsection{State Recurrence Equations\label{subsec:appendix-Matrices}}
\begin{small}
\[
\setlength{\arraycolsep}{3.5pt}
\protect\begin{aligned}{\bf {M}}\left({\Delta{\bf {q}}\left({k-1}\right)}\right) = \left[{\protect\begin{array}{cccc}
1 & {-\frac{{{\omega_{{x_{{\rm {b}}}}}}\Delta t}}{2}} & {-\frac{{{\omega_{{y_{{\rm {b}}}}}}\Delta t}}{2}} & {-\frac{{{\omega_{{z_{{\rm {b}}}}}}\Delta t}}{2}}\protect\\
{\frac{{{\omega_{{x_{{\rm {b}}}}}}\Delta t}}{2}} & 1 & {\frac{{{\omega_{{z_{{\rm {b}}}}}}\Delta t}}{2}} & {-\frac{{{\omega_{{y_{{\rm {b}}}}}}\Delta t}}{2}}\protect\\
{\frac{{{\omega_{{y_{{\rm {b}}}}}}\Delta t}}{2}} & {-\frac{{{\omega_{{z_{{\rm {b}}}}}}\Delta t}}{2}} & 1 & {\frac{{{\omega_{{x_{{\rm {b}}}}}}\Delta t}}{2}}\protect\\
{\frac{{{\omega_{{z_{{\rm {b}}}}}}\Delta t}}{2}} & {\frac{{{\omega_{{y_{{\rm {b}}}}}}\Delta t}}{2}} & {-\frac{{{\omega_{{x_{{\rm {b}}}}}}\Delta t}}{2}} & 1
\protect\end{array}}\right],
\protect\end{aligned}
\]
\end{small}
\begin{small}
\[
\setlength{\arraycolsep}{1.0pt}
{{\bf {M}}_{1}} \!=\!\! \left[{\begin{array}{ccc}
{q_{0}} & {-{q_{3}}} & {q_{2}}\\
{q_{1}} & {q_{2}} & {q_{3}}\\
{-{q_{2}}} & {q_{1}} & {q_{0}}\\
{-{q_{3}}} & {-{q_{0}}} & {q_{1}}
\end{array}}\right]\!,
{{\bf {M}}_{2}} \!=\!\! \left[{\begin{array}{ccc}
	{q_{3}} & {q_{0}} & {-{q_{1}}}\\
	{q_{2}} & {-{q_{1}}} & {-{q_{0}}}\\
	{q_{1}} & {q_{2}} & {q_{3}}\\
	{q_{0}} & {-{q_{3}}} & {q_{2}}
	\end{array}}\right]\!,
{{\bf {M}}_{3}} \!=\!\! \left[{\begin{array}{ccc}
	{-{q_{2}}} & {q_{1}} & {q_{0}}\\
	{q_{3}} & {q_{0}} & {-{q_{1}}}\\
	{-{q_{0}}} & {q_{3}} & {-{q_{2}}}\\
	{q_{1}} & {q_{2}} & {q_{3}}
	\end{array}}\right]\!,
\]
\end{small}
\begin{small}
\[
{{\bf {M}}_{4}}=\left[{\begin{array}{ccc}
{2{{\bar{p}}_{{x_{{\rm {i}}}}}}{q_{0}}+2{q_{3}}} & {2{{\bar{p}}_{{x_{{\rm {i}}}}}}{q_{3}}-2{q_{0}}} & {-2{{\bar{p}}_{{x_{{\rm {i}}}}}}{q_{2}}-2{q_{1}}}\\
{2{{\bar{p}}_{{x_{{\rm {i}}}}}}{q_{1}}-2{q_{2}}} & {2{{\bar{p}}_{{x_{{\rm {i}}}}}}{q_{2}}+2{q_{1}}} & {2{{\bar{p}}_{{x_{{\rm {i}}}}}}{q_{3}}-2{q_{0}}}\\
{2{{\bar{p}}_{{x_{{\rm {i}}}}}}{q_{2}}-2{q_{1}}} & {2{{\bar{p}}_{{x_{{\rm {i}}}}}}{q_{1}}-2{q_{2}}} & {-2{{\bar{p}}_{{x_{{\rm {i}}}}}}{q_{0}}-2{q_{3}}}\\
{2{{\bar{p}}_{{x_{{\rm {i}}}}}}{q_{3}}+2{q_{0}}} & {2{{\bar{p}}_{{x_{{\rm {i}}}}}}{q_{0}}+2{q_{3}}} & {2{{\bar{p}}_{{x_{{\rm {i}}}}}}{q_{1}}-2{q_{2}}}
\end{array}}\right],
\]
\end{small}
\begin{small}
\[
{{\bf {M}}_{5}}=\left[{\begin{array}{ccc}
{2{{\bar{p}}_{{y_{{\rm {i}}}}}}{q_{0}}-2{q_{2}}} & {2{{\bar{p}}_{{y_{{\rm {i}}}}}}{q_{3}}+2{q_{1}}} & {-2{{\bar{p}}_{{y_{{\rm {i}}}}}}{q_{2}}-2{q_{0}}}\\
{2{{\bar{p}}_{{y_{{\rm {i}}}}}}{q_{1}}-2{q_{3}}} & {2{{\bar{p}}_{{y_{{\rm {i}}}}}}{q_{2}}+2{q_{0}}} & {2{{\bar{p}}_{{y_{{\rm {i}}}}}}{q_{3}}+2{q_{1}}}\\
{2{{\bar{p}}_{{y_{{\rm {i}}}}}}{q_{2}}-2{q_{0}}} & {2{{\bar{p}}_{{y_{{\rm {i}}}}}}{q_{1}}-2{q_{3}}} & {-2{{\bar{p}}_{{y_{{\rm {i}}}}}}{q_{0}}+2{q_{2}}}\\
{2{{\bar{p}}_{{y_{{\rm {i}}}}}}{q_{3}}-2{q_{1}}} & {2{{\bar{p}}_{{y_{{\rm {i}}}}}}{q_{0}}-2{q_{2}}} & {2{{\bar{p}}_{{y_{{\rm {i}}}}}}{q_{1}}-2{q_{3}}}
\end{array}}\right].
\]
\end{small}

\bibliography{myreferences}
\bibliographystyle{IEEEtran}

\end{document}